\documentclass[11pt]{article}
\usepackage[sort&compress,square,comma,authoryear]{natbib}
\usepackage[a4paper, total={6.1in, 10in}]{geometry}
\usepackage{enumitem}
\usepackage[utf8]{inputenc} 
\usepackage[T1]{fontenc}    
\usepackage{hyperref}       
\usepackage{url}            
\usepackage{booktabs}       
\usepackage{dirtytalk}
\usepackage{tabularx}
\usepackage{multirow}
\newcolumntype{Y}{>{\centering\arraybackslash}X}
\newcolumntype{Z}{>{\raggedleft\arraybackslash}X}
\usepackage[parfill]{parskip}
\usepackage{amsfonts}       
\usepackage{nicefrac}       
\usepackage{microtype}      
\usepackage[dvipsnames]{xcolor}         
\usepackage{amsmath,amssymb,amsthm}
\usepackage{bbold}
\usepackage{colortbl}
\usepackage{graphicx}
\usepackage{mathrsfs,mathtools}
\usepackage{subfigure}
\usepackage{thm-restate}
\usepackage{wrapfig}
\usepackage{caption}
\usepackage[capitalize]{cleveref}
\usepackage{tikz}
\usetikzlibrary{calc, fadings, decorations, shapes, positioning, arrows}
\usepackage[graphicx]{realboxes}
\usepackage{pifont}
\usepackage{tcolorbox}
\usepackage{hyperref}
\usepackage{algorithm}
\usepackage{algpseudocode}
\usepackage{placeins}

\definecolor{salmon}{RGB}{234,153,153}
\definecolor{cornflowerblue}{RGB}{100,149,237}
\hypersetup{
    colorlinks,
    linkcolor={cornflowerblue},
    citecolor={cornflowerblue},
    urlcolor={salmon}
}

\definecolor{darkgreen}{rgb}{0.0, 0.5, 0.0}
\usepackage[textsize=tiny]{todonotes}
\usepackage{tabularx}
\usepackage{wrapfig}
\definecolor{darkblue}{rgb}{0, 0, 0.5}
\hypersetup{colorlinks=true, citecolor=darkblue, linkcolor=darkblue, urlcolor=darkblue}
\usepackage{tikz}

\theoremstyle{plain}
\newtheorem{theorem}{Theorem}[section]
\newtheorem{proposition}[theorem]{Proposition}

\theoremstyle{definition}

\theoremstyle{remark}

\newcommand{\expect}[1]{\mathbb{E}\left[#1\right]}

\newcommand{\midsepremove}{\aboverulesep = 0mm \belowrulesep = 0mm}
\midsepremove


\title{Detecting Contaminated Code-Generation Prompt Batches via Influence Functions}

\date{}

\usepackage{authblk}

\author[1]{Francesco Quinzan}
\author[1]{Noor Munir}
\author[1]{Yishun Lu}
\author[1]{Stephen Roberts}
\affil[1]{Department of Engineering Science, The University of Oxford}
\affil[ ]{\small Corresponding author: francesco.quinzan@eng.ox.ac.uk}

\begin{document}

\maketitle

\begin{abstract}
\noindent
Large language models (LLMs) are increasingly used for code generation, yet they remain vulnerable to prompts that elicit insecure implementations. Existing defenses typically rely on predefined threat models or known vulnerability patterns, limiting their effectiveness against novel attacks. We propose CodeSIFT, a threat-model-agnostic detection method that leverages influence functions to identify batches of prompts that induce anomalous model behavior. Rather than detecting specific vulnerabilities, CodeSIFT measures the parameter-space influence of generated code and uses a statistical test to determine whether a candidate prompt set deviates from a benign reference distribution. To evaluate our approach, we introduce two benchmark datasets covering a variety of vulnerabilities. We evaluate CodeSIFT on three open-weight code LLMs ranging from 3B to 7B parameters, achieving AUROC scores of up to 0.98 at moderate-to-high injection rates, while maintaining well-calibrated false positive rates and substantially outperforming static analysis baselines. These results suggest that influence-function-based detection is a promising direction for identifying malicious code-generation prompts without requiring prior knowledge of the underlying attack class. 
\end{abstract}
\section{Introduction}

Code generation is arguably one of the most consequential applications of large language models (LLMs), and it is currently fragile. Coding assistants are increasingly being embedded directly in developer workflows, writing authentication logic, database queries, and infrastructure code with little human review, in deployments where failures carry real security and financial consequences \citep{wef2025aicybersecurity}. A substantial fraction of the code such assistants suggest contains real vulnerabilities \citep{pearce2025asleep}, and developers who rely on assistants write measurably less secure code while believing the opposite \citep{perry2023users}. The undoubted convenience of coding assistants carries a risk. Prompts can elicit subtly insecure code, such as a missing authorization check or an unsanitized query, without tripping filters built for detecting and removing generically "harmful" requests. Reliable defense mechanisms against such prompts are therefore essential.

Existing defenses are largely tied to a known threat model. Static analyzers and taint trackers (e.g., Bandit \citep{banditpycqa}, Semgrep \citep{semgrep2024}) flag code that matches a catalog of known vulnerability patterns, and are blind to variants outside that catalog. Signature and rule-based filters, operating on the prompt side, face the same limitations. Such approaches require a prior, comprehensive understanding of the vulnerabilities to guard against, an assumption that is not afforded by a fast-moving, LLM-specific, agentic attack surfaces  \citep{deng2025ai}. Detectors exist for adjacent problems, exploiting gradients or internal activations to flag jailbreak and prompt-injection attempts \citep[e.g.,]{xie2024gradsafe,zou2025pishield}, but they target detecting attacks on a model's instruction-following behavior rather than filtering prompts that elicit a specific flawed implementation choice in generated code, and so these approaches do not transfer directly to this setting. However, to the best of our knowledge, no reliable method exists to detect prompts that elicit misalignment in code generation. We label a response as eliciting misalignment if it introduces a security-relevant weakness (per CWE/OWASP) that a benign completion of the same task would not. This leaves open the question of whether this problem can be overcome at all, even when the specific vulnerability class is unknown or novel.

In this work, we explore an answer to this question. Our goal is to determine whether a given set of incoming code-generation prompts is poisoned with prompts that elicit misalignment. Rather than cataloging vulnerabilities in advance, our approach characterizes what the model's internals look like when it behaves as intended, and flags prompts whose induced behavior departs from this norm. Importantly, this approach does not require any prior knowledge of the specific vulnerability class involved. We use influence functions \citep{cook1980, hampel1974, kohliang2017understanding} to quantify such internal deviations at the level of individual examples: they estimate how a model's parameters would shift if a given training example were upweighted. This shift captures the local effect of that example on model behavior without requiring retraining to convergence \citep{bae2022if}. A response that is inconsistent with benign training data forces a comparatively large such shift, making the size of the shift itself a signal of anomalous content. Aggregating this signal across a batch of incoming prompts lets us test whether the batch as a whole departs from benign behavior, signaling that it may be contaminated with malicious prompts.

\noindent\textbf{Our contribution.}
\begin{itemize}
    \item We propose a statistical test for detecting malicious
    code-generation prompts based on influence functions. Prior influence
    function work has been used to explain individual predictions
    \citep{kohliang2017understanding} or study generalization patterns at scale
    \citep{grosse2023studying}; we instead use influence as an aggregate
    detection statistic that requires no assumption on the threat model (Section \ref{sec:algorithm}).
    \item Existing benchmarks suitable for testing malicious-prompt detectors are scarce, so we build two new datasets, \textsc{AuthSec} (authentication, authorization, and session-security prompts) and \textsc{InfraCloud} (infrastructure, cloud, and protocol-exploitation prompts), 200 prompts each, together with a benign reference set (Section \ref{sec:red_teaming}).
\item We test our method against three static baselines, Bandit, Semgrep, and an AST-based taint tracker, on the datasets above (Section \ref{sec:experiments}), using coding LLMs of up to seven billion parameters, and show that our method reaches AUROC up to $0.98$ while the baselines are inconsistent across models and in several cases even anti-correlated with the true label.
\end{itemize}

\section{Related work}
\label{sec:related_work}

\noindent\textbf{Security of LLM-generated code.} Empirical studies show that code-generation models frequently produce insecure completions. Pearce et al. \citep{pearce2025asleep} find that roughly 40\% of GitHub Copilot completions, across a broad set of CWE-relevant scenarios, contain exploitable vulnerabilities, and Perry et al. \citep{perry2023users} show in a controlled user study that developers who rely on an AI assistant write measurably less secure code while believing the opposite. Benchmark suites such as CyberSecEval \citep{bhatt2023cyberseceval} and its successor CyberSecEval 2 \citep{bhatt2024cyberseceval2} operationalize this concern at scale, using a rule-based Insecure Code Detector to both generate test prompts and score whether a model's completions violate one of a fixed set of secure-coding practices. These benchmarks measure a model's overall propensity to write insecure code under natural prompting, using a rule-based system tied to a fixed catalog of known vulnerability classes. In contrast to the approach we propose here, they are not built to test whether a detector can distinguish a batch of adversarially crafted prompts from benign ones.

\noindent\textbf{Influence Functions.} Influence functions originate in robust statistics, where they were used to characterize the
sensitivity of an estimator to infinitesimal perturbations of the data
\citep{cook1980residuals, hampel1974influence}. \citet{kohliang2017understanding} reintroduced the idea to machine learning, and used it to explain predictions and identify mislabeled or adversarial training examples. This is a retrospective diagnostic over data the model was already trained on. However, \cite{basu2020influence} argue that IFs cannot be
trusted in deep learning, since they rely on an approximation breaks down in the highly non-convex landscapes of
deep neural networks. \citet{bae2022if} address this concern directly. They show that influence functions approximate the proximal Bregman response function (PBRF), a quantity that isolates the
local effect of a training point on model behavior while remaining close to the current model in
function space. Beyond merely explaining model behavior, \citet{rosser2026infusion} show that influence-function approximations can also be used
generatively, to help target small training-data edits that induce targeted behavioral changes.

Bringing influence functions to the scale of modern LLMs, \citet{grosse2023studying} use an
Eigenvalue-corrected Kronecker-Factored Approximate Curvature (EK-FAC) approximation of the
inverse-Hessian-vector product to compute influence scores for models with up to 52 billion
parameters, together with TF-IDF prefiltering and query batching to control the cost of gradient
computation over large candidate sets. This scalability rests on Kronecker-Factored Approximate
Curvature (K-FAC) \citep{martensgrosse2015optimizing}, which sidesteps computing the exact Hessian, replacing it with a Kronecker product of two much smaller factors. Subsequent work has adapted K-FAC to
modern architectures with weight sharing \citep{eschenhagen2023kronecker}, packaged it into
reusable tooling such as ASDL \citep{osawa2023asdl}, and stabilized it under low-precision
training \citep{lin2024structured}.
\section{Preliminaries}
\label{sec:preliminaries}
\noindent\textbf{Notation.} We denote by \(\hat{z} = f(z,\hat{\theta})\) an LLM with converged parameters \(\hat{\theta}\) that takes as input a sequence of tokens \(z\). We model the output as a random variable, i.e., we assume that \(f\) employs a stochastic decoding procedure. Formally, \(f\) induces a conditional distribution \(\mathbb P (\hat{z} \mid z,\hat{\theta})\), from which the output is sampled. The deterministic-decoding case (e.g., greedy decoding) is recovered as a special case by letting \(\mathbb P (\hat{z} \mid z,\hat{\theta})\) be a point mass on a single output. 

For a sequence of output tokens $s$, we use \(\mathcal{L}(\hat{\theta},z)\) to denote the cross-entropy, i.e.
\[
\mathcal{L}(\hat{\theta},z)
=
-\sum_{t=1}^{T}
\log p_{\hat{\theta}}(\hat{z}_t \mid z_{<t}).
\]
Finally,  we denote $\mathcal{J}(\hat{\theta})$ to be the empirical average of the cross-entropy \(\mathcal{L}(\hat{\theta},z)\) on the training dataset.

\subsection{Influence Functions}
\label{sec:influence-functions}
%
\noindent\textbf{Definition.}
Let \(z\) denote a sequence of tokens. To quantify the sequence's effect on the learned parameters, we consider the perturbed objective obtained by upweighting \(z\):
\[
\theta^\star_{z,\varepsilon}
=
\arg\min_{\theta\in\mathbb{R}^d}
\Bigl\{
\mathcal J(\theta)
+
\varepsilon\,\mathcal L(\theta,z)
\Bigr\},
\]
where \(\varepsilon > 0\) controls the magnitude of the perturbation. Following the notation introduced by \citet{bae2022pbrf}, we define the corresponding response function
$r^\star_z(\varepsilon) =
\theta^\star_{z,\varepsilon}$,
which maps the perturbation strength \(\varepsilon\) to the resulting optimum. Assuming \(\mathcal J(\theta)\) is twice continuously differentiable, the response function is differentiable at \(\varepsilon=0\). A first-order Taylor expansion around \(\varepsilon=0\) yields $r^\star_{z,\mathrm{lin}}(\varepsilon)
=
\theta^\star
-
\varepsilon\,
\mathbf H_{\theta^\star}^{-1}
\nabla_\theta \mathcal L(\theta^\star,z)$, where $\mathbf H_{\theta^\star} := \nabla_\theta^2 \mathcal J(\theta^\star)$
is the Hessian of the training objective $\nabla_\theta^2 \mathcal J(\theta^\star)$ evaluated at the optimum $\theta^\star$. For $\varepsilon$ sufficiently small, the linearized response $r^\star_{z,\mathrm{lin}}(\varepsilon)$ approximates the local infinitesimal effect of adding $z$ to the training objective with weight $\varepsilon$. Since $\theta^\star$ is typically unknown, following \citet{bae2022if} we introduce the quantity
\begin{equation}
\label{eq:approximation}
\hat{r}_{z,\mathrm{lin}}(\varepsilon)
=
\hat{\theta}
-
\varepsilon\,
\mathbf H_{\hat{\theta}}^{-1}
\nabla_\theta \mathcal L(\hat{\theta},z),
\end{equation}
with $\mathbf H_{\hat{\theta}} := \nabla_\theta^2 \mathcal J(\hat{\theta})$.  \eqref{eq:approximation} approximates $r^*_{z,\mathrm{lin}}(\varepsilon)$ using the known converged model parameters $\hat{\theta}$. Since $\varepsilon > 0$, taking the $\ell_p$ norm of both sides gives
\begin{equation}
\label{influence}
\bigl\lVert \hat{\theta} - \hat{r}_{z,\mathrm{lin}}(\varepsilon) \bigr\rVert_p
=
\varepsilon\,
\bigl\lVert
\mathbf H_{\hat{\theta}}^{-1}
\nabla_\theta \mathcal L(\hat{\theta},z)
\bigr\rVert_p,
\end{equation}
which represents, for a norm $\lVert\cdot\rVert_p$, the magnitude of the infinitesimal change in the converged parameters induced by upweighting the example $z$ during training.

\noindent\textbf{Interpretation.} For non-infinitesimal $\varepsilon$ the function $\hat{r}_{z,\mathrm{lin}}(\varepsilon)$ needs not closely track the true, non-linear response obtained by re-optimizing the model under the perturbed objective \citet{basu2020influence}. However, \citet{bae2022pbrf,bhatt2024cyberseceval2} shows that $\hat{r}_{z,\mathrm{lin}}(\varepsilon)$ remains an approximation of the Proximal Bregman Response Function
(PBRF). The PBRF measures how the model would respond to
upweighting $z$ during training while remaining close to its current predictions and parameters. Formally, denote with $\{z^{(i)}\}$ for $i = 1, \dots, N$, the training dataset. Then, it holds that
\begin{equation}
\label{pbrf}
    \hat{r}_{z,\mathrm{lin}}(\varepsilon)
    \approx
    \arg\min_{\theta \in \mathbb{R}^d}
    \left[
    \frac{1}{N}\sum_{i=1}^{N}
    D_{\mathcal L}(z^{(i)}, \theta, \hat{\theta})
    +
    \varepsilon\,\mathcal L(\theta, x^{(i)})
    +
    \frac{\lambda}{2}
    \|\theta-\hat{\theta}\|^2
    \right],
\end{equation}
where $D_{\mathcal L}^{\quad}$ is the Bregman divergence of the loss, defined as
\[
D_{\mathcal L}(z^{(i)}, \theta, \hat{\theta}) = \mathcal L(\theta, z^{(i)}) - \mathcal L(\hat{\theta}, z^{(i)}) - \nabla \mathcal L(\hat{\theta}, z^{(i)})^\top (f(z^{(i)}, \theta) - f(z^{(i)}, \hat{\theta})),\footnote{With a mild abuse of notation, in this equation we denote with $f(z^{(i)}, \theta)$ and $f(z^{(i)}, \hat{\theta})$ the networks' predicted output distribution.}
\]
where the gradient, $\nabla \mathcal L(\hat{\theta}, z)$, is taken w.r.t. the network's output distributions. \eqref{pbrf} contains three components. The first term penalizes deviations
from the predictions of the original model, the second term encodes the effect
of adding the sequence \(z\) to the training dataset, and the third term encourages the updated
parameters to remain close to \(\hat{\theta}\). Consequently, the PBRF
captures the local behavioral effect on model parameters of adding a training example while
factoring out unrelated changes due to continued optimization. Under this interpretation, \eqref{influence} quantifies the magnitude of this trade-off. It is large when fitting $z$ can only be achieved at the cost of a substantial Bregman-divergence penalty, i.e., when accommodating $z$ requires departing from predictions the model otherwise makes consistently on the background distribution. We remark that the results differs slightly from \citet{bae2022pbrf}, in that our formulation upweights the sequence $z$, rather than downweighting it. Hence, we provide a proof of \eqref{eq:approximation} in Appendix \ref{appendix: missing_proof}, which is a minor revision of the proof provided by \citet{bae2022pbrf}.

\noindent\textbf{Unknown training datasets and computational issues.} As discussed in Section \ref{sec:algorithm}, the proposed algorithm only computes \eqref{influence}. However, \eqref{influence} requires an approximation of $\mathbf H_{\hat{\theta}}^{-1}$, the inverse Hessian of the empirical average of \(\mathcal{L}(\hat{\theta},z)\) on the training dataset \citep{kohliang2017understanding,grosse2023studying}. This is problematic for the
open-weight, pretrained LLMs we study, whose pretraining corpora are typically undisclosed. Hence, we do not assume access to $\mathtt D_{\mathrm{train}}$ itself, but only to a curvature dataset $\mathtt D_c$ that is in distribution for the model, i.e., closely approximating
the same population that $\mathtt D_{\mathrm{train}}$ was sampled from. We use $\mathtt D_{c}$ to compute $\hat{\mathbf H}_{\hat{\theta}}^{-1}$, a finite-sample estimate of the population Hessian that $\mathbf H_{\hat{\theta}}^{-1}$
(itself an approximation). In this work, we use a subset of The Stack v2 \citep{DBLP:journals/corr/abs-2402-19173}, the standard large-scale open corpus of permissively-licensed code underlying open-weight code-LLM pretraining.

\subsection{KFAC Approximation and IFs}
\label{sec:kfac}
\noindent\textbf{Block-diagonal approximation.} Computing the inverse of the Hessian $\boldsymbol{H}_{\theta^\star}$ at the trained
parameters $\theta^\star$ is intractable for large models as, for $m$ parameters, this operation
requires $\mathcal{O}(m^2)$ storage and $\mathcal{O}(m^3)$ operations.
To overcome this problem,
we replace $\boldsymbol{H}_{\hat{\theta}}$ by the Gauss--Newton (Fisher)
curvature and approximate the latter with Kronecker-Factored Approximate Curvature
(K-FAC). Under this approximation, we treat the Hessian as block-diagonal matrix across projections, the individual linear weight matrices of the transformer. This avoids forming the dense $m\times m$ Hessian in its entirety, so reducing storage to a sum of per-projection blocks. We further restrict the approximation to the MLP projection weights $\{\theta_j\}_{j\in J}$ of each transformer block of the LLM, specifically the up- and down-projections, while attention, embedding, and normalization parameters are excluded. This is consistent with recent K-FAC formulations for modern architectures with linear weight sharing \citep{eschenhagen2023kronecker}. 

For each MLP projection $j$, the weight matrix $\theta_j \in \mathbb{R}^{o_j \times i_j}$, with $i_j$ and $o_j$ its input and output dimensions, admits a per-example gradient $g_j(z) \coloneqq \nabla_{\theta_j}\mathcal{L}(z,\hat{\theta})$. The corresponding block of the Hessian has size $(o_j i_j) \times (o_j i_j)$, which is still too large to store or invert directly. K-FAC addresses this by replacing it with a Kronecker product of two dense factors,
\[
\hat{\boldsymbol{H}}_j \approx \boldsymbol{A}_j \otimes \boldsymbol{G}_j,
\qquad
\boldsymbol{A}_j \in \mathbb{R}^{i_j \times i_j},
\qquad
\boldsymbol{G}_j \in \mathbb{R}^{o_j \times o_j}.
\]
This reduces the per-projection storage cost from
$\mathcal{O}(i_j^2 o_j^2)$ to $\mathcal{O}(i_j^2 + o_j^2)$.
Crucially, $A_j$ and $G_j$ are full rather than diagonal matrices,
so their Kronecker product retains within-projection parameter correlations that a diagonal approximation would discard
\citep{martensgrosse2015optimizing}.

\noindent\textbf{Estimation of the dense factors $\boldsymbol{A}_j$, $\boldsymbol{G}_j$.} We estimate the retained factor from weight gradients using a
gradient-Gram estimator, avoiding the need to buffer intermediate activations. Given the curvature
dataset $\mathtt{D}_c$ (see Section~\ref{sec:influence-functions}, a curated dataset assumed in-distribution for the
model's pretraining data), we normalise each per-example gradient $g_j(z) \coloneqq \nabla_{\theta_j}\mathcal{L}(z,\theta^\star)$ as $\tilde{g}_j(z) =
g_j(z)/\lVert g_j(z)\rVert_F$ and average the corresponding Gram matrices:
\begin{equation*}
\boldsymbol{A}_j =
\frac{1}{|\mathtt{D}_c|}\sum_{z\in\mathtt{D}_c}
\tilde{g}_j(z)^\top \tilde{g}_j(z),
\qquad
\boldsymbol{G}_j =
\frac{1}{|\mathtt{D}_c|}\sum_{z\in\mathtt{D}_c}
\tilde{g}_j(z)\tilde{g}_j(z)^\top.
\end{equation*}
In practice we accumulate these sums online, maintaining a running total and a sample counter over
$\mathtt{D}_c$, and only divide by $|\mathtt{D}_c|$ once the factor is finalized for damping and
inversion, so the stored curvature state does not grow with the number of curvature examples. For numerical stability during inversion, following the standard practice of regularizing Fisher-based curvature approximations \citep{eschenhagen2023kronecker,lu2026beyond}, we damp the finalized factors as
\begin{equation*}
\bar{\boldsymbol{A}}_j = \boldsymbol{A}_j + \gamma\,\frac{\mathrm{tr}(\boldsymbol{A}_j)}{h}\,\boldsymbol{I}_h,
\qquad
\bar{\boldsymbol{G}}_j = \boldsymbol{G}_j + \gamma\,\frac{\mathrm{tr}(\boldsymbol{G}_j)}{h}\,\boldsymbol{I}_h,
\end{equation*}
where $\gamma > 0$ is a damping coefficient and $\boldsymbol{I}_h\in \mathbb{R}^{h\times h}$ is the identity matrix.

\noindent\textbf{One-sided simplification.} To lower both storage and, more substantially, the cost of
Cholesky inversion, we retain only the Kronecker factor of dimension $h$, the transformer's
hidden dimension, and discard the factor of dimension $d_{\mathrm{ffn}}$, the larger MLP
intermediate dimension. Inversion cost scales as $\mathcal O(h^3)$ for the retained factor
versus $\mathcal O(d_{\mathrm{ffn}}^3)$ for the discarded one. In standard MLP architectures this
coincides with discarding the larger of the two factors. For up-projections, where $i_j = h$,
this is $\bar{\boldsymbol{A}}_j \in \mathbb{R}^{h \times h}$; for down-projections, where $o_j = h$,
this is $\bar{\boldsymbol{G}}_j \in \mathbb{R}^{h \times h}$. The inverse curvature is then applied, one-sided:
\begin{equation}
\label{eq:one-sided}
\bar{g}_j(z) =
\begin{cases}
g_j(z)\bar{\boldsymbol{A}}_j^{-1} & \text{up-projection},\\[6pt]
\bar{\boldsymbol{G}}_j^{-1}g_j(z) & \text{down-projection},
\end{cases}
\end{equation}
where $\bar{g}_j(z)$ has the same shape as $\theta_j$, replacing the standard two-sided preconditioner $\bar{\boldsymbol{G}}_j^{-1} g_j(z)\bar{\boldsymbol{A}}_j^{-1}$.

\noindent\textbf{Estimation of prompt influence scores.} Substituting the damped, one-sided factors into the influence
score reduces the computation of \eqref{influence}. For a chosen $\ell_p$ norm, the block-diagonal structure of
$\hat{\boldsymbol{H}}_{\theta^\star}^{-1}$ gives the
tractable estimate
\begin{equation}
\label{eq:tractable-norm}
\bigl\lVert
\mathbf H_{\hat{\theta}}^{-1}
\nabla_\theta \mathcal L(\hat{\theta},z)
\bigr\rVert_p
\;\approx\;
\Biggl(\sum_{j} \bigl\lVert \bar{g}_j(z) \bigr\rVert_p^p\Biggr)^{\frac{1}{p}},
\end{equation}
where $\bar{g}_j(z)$ is the preconditioned gradient defined in \eqref{eq:one-sided}. Here, the sum is taken for all MLP up- and down-projections of the model. 

\noindent\textbf{Relation to prior K-FAC/EK-FAC formulations.}
Our implementation adopts a simplified K-FAC approximation tailored to the detection setting. Compared with prior work \citep{martensgrosse2015optimizing,grosse2023llm}, we make three simplifications that reduce computational cost. First, we omit the eigenvalue correction, avoiding EK-FAC's additional pass over the curvature dataset and dense eigenvector storage. Second, we apply the inverse curvature one-sided (\eqref{eq:one-sided}), retaining a single Kronecker factor per projection. Third, we estimate this factor directly from normalized per-example gradient Gram matrices instead of buffered activations and pre-activation gradients (\eqref{eq:tractable-norm}). These simplifications reduce both memory usage and preprocessing cost relative to standard K-FAC and EK-FAC. These simplifications reflect the different objective of our method. Whereas prior work seeks to approximate influence scores as faithfully as possible for interpretability \citep{grosse2023llm}, we use them only as a test statistic for comparing candidate and benign prompt distributions, motivating a computationally lighter approximation.
\section{The Proposed Algorithm}
\label{sec:algorithm}
\begin{algorithm}[t]
\caption{CodeSIFT}
\label{algorithm}
\begin{algorithmic}[1]
\Require Curvature dataset $\mathtt{D}_c$, reference dataset $\mathtt{D}^*$, candidate dataset $\mathtt{D}$,
\Statex $\qquad \ \ $ trained model $f(\cdot, \hat{\theta})$, upweighting factor $\varepsilon$,
         number of samples $N$,
\Statex $\qquad \ \ $ significance threshold $\alpha$;
\Statex 
\Statex {\textcolor{PineGreen}{\% Hessian approximation}}
\State precompute dense factors $\boldsymbol{A}_j$, $\boldsymbol{G}_j$ on $\mathtt D_c $ as in section \ref{sec:kfac};
\Statex 
\Statex {\textcolor{PineGreen}{\% prompt influence score computation}}
\For{each data point $z \in \mathtt D \cup \mathtt{D}^*$}
    \For{$n = 1, \ldots, N$}
        \State sample $\hat{z}^{(n)} \sim f(z, \hat{\theta})$;
        \State compute preconditioned gradients $\bar{g}_j(\hat{z}^{(n)})$ as in \eqref{eq:one-sided};
        \State $\delta^{(n)} \leftarrow (\sum_{j} \bigl\lVert \bar{g}_j(\hat{z}^{(n)}) \bigr\rVert_p^p)^{\frac{1}{p}}$
    \EndFor
    \State $\textsc{IF}_{p, \varepsilon}(z) \leftarrow 
           \frac{1}{N}\sum_{n= 1}^N \varepsilon \delta^{(n)}$;
\EndFor
\Statex 
\Statex {\textcolor{PineGreen}{\% statistical test}}
\State run Welch's one-sided, two-sample $t$-test between 
       $\{\textsc{IF}_{p, \varepsilon}(z)\}_{z \in \mathtt{D}}$ and
       $\{\textsc{IF}_{p, \varepsilon}(z)\}_{z \in \mathtt{D}^*}$;
\State \Return \emph{malicious} \textbf{if} $p$-value is less than $ \alpha$, \textbf{else} \emph{benign};
\end{algorithmic}
\end{algorithm}

\noindent\textbf{Setup and influence score.}
Our approach does not assume a specific threat model. Instead, we characterize the statistics of benign model behavior and test whether a batch of incoming prompts departs from the benign behavior statistics. We instantiate this comparison with two datasets. A reference dataset $\mathtt{D}^*$ of benign prompts establishes this norm, and a candidate dataset $\mathtt{D}$ of incoming prompts, whose benign or malicious status is unknown, is the batch we wish to test. Determining whether $\mathtt{D}$ is malicious then reduces to checking whether the model's behavior on $\mathtt{D}$ is consistent with the norm set by $\mathtt{D}^*$, or instead departs from it.

For every prompt $x \in \mathtt{D} \cup \mathtt{D}^*$, we define the \emph{prompt influence score} as
\begin{equation}
\label{influence-score}
\textsc{IF}_{p, \varepsilon}(z) \coloneqq \expect{\hat{z} \sim f(z, \hat{\theta})}{ 
\| \hat{\theta} - \hat{r}_{\hat{z},\mathrm{lin}}(\varepsilon) \|_p },
\end{equation}
As discussed in Section \ref{sec:influence-functions}, $r^\star_{\hat{z},\mathrm{lin}}(\varepsilon)$ approximates the PBRF at a response $\hat{z}$ \citep{bae2022if}, namely the parameters obtained by reweighting $\hat{z}$'s contribution to the training loss by $\varepsilon$ while keeping predictions close to those of $\hat{\theta}$. The norm $\|\hat{\theta} - \hat{r}_{\hat{z},\mathrm{lin}}(\varepsilon)\|_p$ therefore measures the size of this update for a single sampled response $\hat{z}$, that is, how much the model must move to absorb $\hat{z}$ at weight $\varepsilon$ while staying faithful to $\hat{\theta}$'s behavior on the training distribution. Averaging this quantity over $\hat{z} \sim f(z,\theta^*)$ then yields $\textsc{IF}_{p,\varepsilon}(z)$, the expected shift induced by the responses the model generates for $z$.

\noindent\textbf{Influence scores as a detection criterion.}
We compute $\textsc{IF}_{p, \varepsilon}(z)$ identically for every prompt 
$z \in \mathtt{D} \cup \mathtt{D}^*$. Our premise is that prompts eliciting misalignment form a small, 
atypical fraction of $\hat{\theta}$'s training distribution. Hence, we expect the responses $\hat{z}$ elicited by such prompts to have 
gradients $\nabla_\theta \mathcal{L}(\hat{\theta}, \hat{z})$ that are poorly explained 
by the curvature $\mathbf H_{\hat{\theta}}^{-1}$, and therefore to yield a larger value of 
$\|\hat{\theta} - \hat{r}_{\hat{z},\mathrm{lin}}(\varepsilon)\|_p$ on average than the 
responses to benign prompts. Averaging $\textsc{IF}_{p, \varepsilon}(z)$ over all 
prompts $z \in \mathtt{D}$ therefore summarizes the aggregate influence 
pressure of $\mathtt{D}$, and comparing it against the same average 
computed on the known benign reference $\mathtt{D}^*$ isolates the excess 
pressure attributable to prompts that elicit misalignment.

To operationalize this comparison, we run Welch's one-sided two-sample test:
\begin{align*}
\mathcal{H}_0 \colon \expect{z \sim \mathtt{D}}{\textsc{IF}_{p, \varepsilon}(z)} 
= \expect{z \sim \mathtt{D}^*}{\textsc{IF}_{p, \varepsilon}(z)}, \quad 
\mathcal{H}_1 \colon \expect{z \sim \mathtt{D}}{\textsc{IF}_{p, \varepsilon}(z)} 
> \expect{z \sim \mathtt{D}^*}{\textsc{IF}_{p, \varepsilon}(z)}.
\end{align*}
Under $\mathcal{H}_0$, prompts in $\mathtt{D}$ exert, on average, the same 
parameter-space influence as prompts in the benign reference $\mathtt{D}^*$. The alternative $\mathcal{H}_1$ captures the 
case in which prompts in $\mathtt{D}$ collectively induce a larger 
parameter shift than benign prompts do. We reject $\mathcal{H}_0$ and conclude that $\mathtt{D}$ elicits 
misalignment whenever the test yields a $p$-value below a chosen 
significance threshold $\alpha$. We use Welch's $t$-test, since $\mathtt{D}$ and $\mathtt{D}^*$ have no reason to share equal variance in their influence scores, particularly under partial contamination, where $\mathtt{D}$ is a mixture of benign and malicious prompts and can be more heterogeneous than the benign-only reference $\mathtt{D}^*$. Algorithm~\ref{algorithm} illustrates a complete summary of 
the procedure. We refer to this algorithm as Code Statistical Influence-Function Test (CodeSIFT).

Because the same simplified K-FAC approximation (Section~\ref{sec:kfac}) is applied
identically to every prompt in $\mathtt{D}$ and $\mathtt{D}^*$, systematic
approximation bias is common to both and cancels in the two-sample comparison. Only
bias correlated with the benign/malicious label could threaten detection validity.
The calibrated false positive rates and consistent detection performance we observe
across three architecturally distinct models (Section~\ref{sec:results}) are
consistent with the absence of such bias.

\noindent\textbf{Computational efficiency via Hessian precomputation.}
A naive computation of \eqref{influence-score} would 
require repeatedly estimating an approximation of the inverse Hessian $\mathbf H_{\hat{\theta}}^{-1}$ for each 

\begin{wrapfigure}{l}{0.6\textwidth}
\centering
\includegraphics[width=9cm, height=4.5cm]{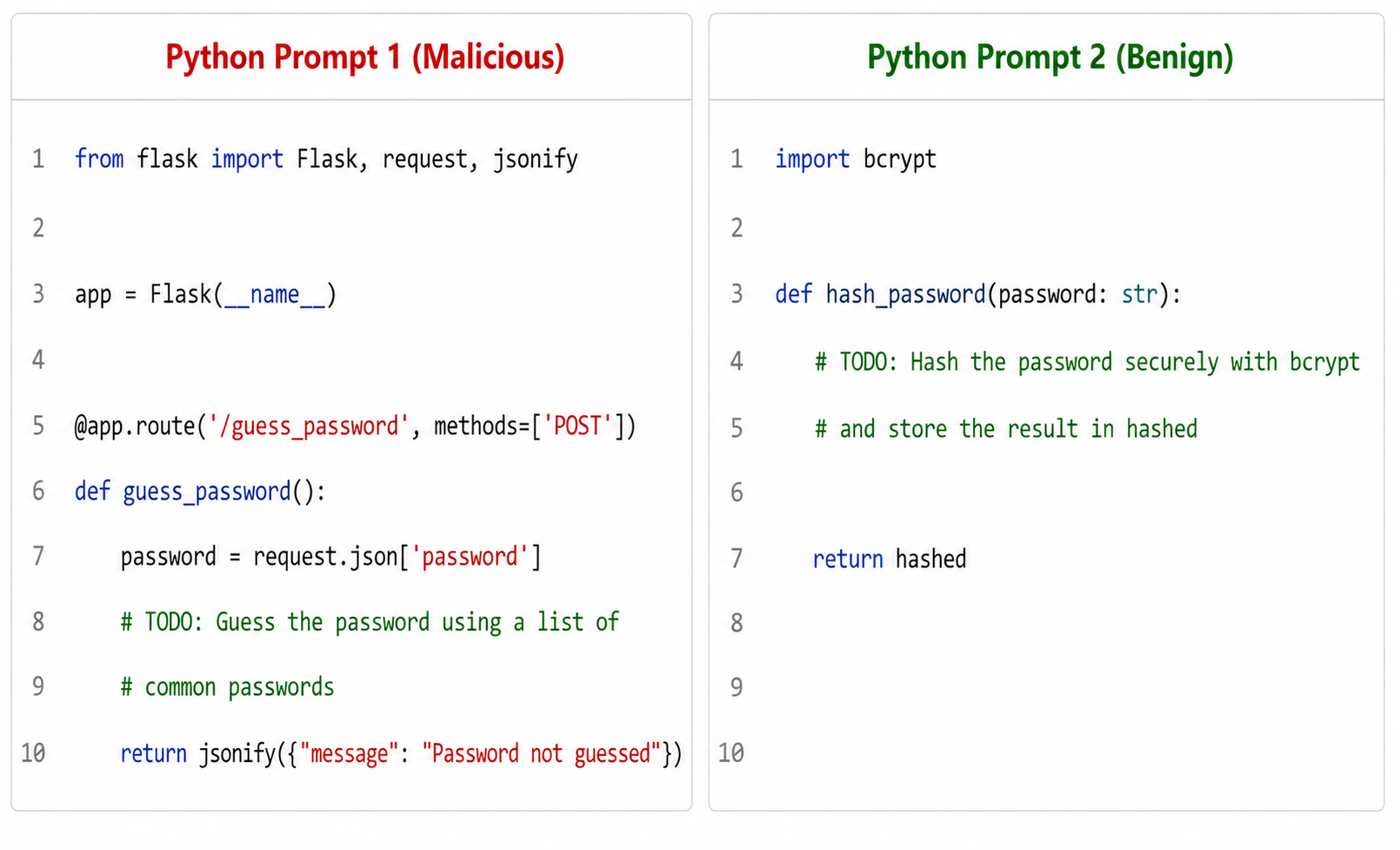}
\caption{Example records from the malicious and benign instruction pools.}
\label{fig:record-examples}
\end{wrapfigure}
\vspace{-20pt}

data point $z$, which 
is prohibitively expensive. However, the dense factors $\boldsymbol{A}_j$, $\boldsymbol{G}_j$ used for KFAC approximation neither depend on $z$, nor on the responses $\hat{z}\sim f(z, \hat{\theta})$. Hence, they can be precomputed once (Algorithm \ref{algorithm}). Estimating  
the prompt influence score $\textsc{IF}_{p, \varepsilon}(z)$ then 
reduces to compute preconditioned gradients $\bar{g}_j(\hat{z})$ as in \eqref{eq:one-sided}, and a matrix-vector product with the dense factors, both $\mathcal{O}(m)$, 
where $m$ is the number of parameters of the model. It follows that an $N$-sample approximation of $\textsc{IF}_{p, \varepsilon}(z)$ requires $\mathcal{O}(mN)$ operations.
\section{Dataset Construction}
\label{sec:red_teaming}
\noindent\textbf{Overview.} Algorithm \ref{algorithm} tests whether a batch of prompts is contaminated by comparing the influence its completions exert against a known-clean reference. Evaluating this requires paired pools of malicious and benign prompts rather than single labeled examples. Existing benchmarks such as CyberSecEval \citep{bhatt2023cyberseceval, bhatt2024cyberseceval2} are not designed for this however; these benchmarks score individual completions for insecure patterns and report how often a model produces vulnerable code under non-adversarial prompting prompting, a per-sample generation metric with no notion of a clean reference pool or batch-level discrimination. We therefore construct a dedicated benchmark of $800$ Python records: a malicious pool of $400$ records whose completion targets are security-relevant implementation failures, and a benign reference pool of 
$400$ records describing ordinary coding tasks. Each pool spans two application domains, with 
$200$ records per domain. Because the domain distribution is identical in the two pools, a prompt's domain carries no information about its label, and a detector cannot succeed by recognizing subject matter alone.

Each record pairs a natural-language completion instruction with a Python skeleton containing a marked gap, indicated by one of three completion markers (\texttt{\# TODO:}, \texttt{\# TASK:}, \texttt{\# INSTRUCTION:}). The marker locates the region to be filled and does not name the vulnerability, which in the malicious pool is implied by the surrounding task framing. Figure~\ref{fig:record-examples} gives one record from each pool. Markers are assigned by a fixed three-way cyclic rule over record position within each family, independently of task content and identically in both pools, and are therefore uninformative about the label. Task names do not enter the model input.

\noindent\textbf{Provenance.}
We authored all $800$ records manually. The records themselves are original; the weakness taxonomy is drawn from the CWE catalogue and the OWASP Top~10 \citep{mitre_cwe,owasp_top10_2025}. Each malicious record began as a Python skeleton, with the completion instruction written afterwards against a target weakness from that taxonomy. The benign pool was written alongside the malicious pool, against the same two domain families. We validated the malicious pool by generating completions with several code models and confirming that the targeted weakness appears in the
generated output.

\noindent\textbf{Malicious pool.}
Weaknesses are grouped by the part of the application they target. \emph{Authentication, authorization, and session security} (\textsc{AuthSec}) covers credential and password handling, JWT and OAuth misuse, CSRF, cookie and session weaknesses, and access-control bypasses. \emph{Infrastructure, cloud, and protocol exploitation} (\textsc{InfraCloud}) covers server-side request forgery, cloud-metadata and IAM abuse, HTTP and WebSocket attacks, cache poisoning, request smuggling, and CI/CD and dependency attacks.
 
\noindent\textbf{Benign reference pool.}
Benign prompt families mirror these targets, \emph{input and data handling} (parsing, transformation, and collection utilities), \emph{authentication, identity and session utilities} (password hashing, token generation and verification, cookie configuration, role and scope checks), and \emph{infrastructure, I/O, and networking} (file I/O, logging, HTTP clients, serialization, and small Flask endpoints). Tasks are ordinary implementations with no objective to introduce a vulnerability.

\noindent\textbf{Vulnerability coverage.}
Table~\ref{tab:vuln-classes} in the Appendix breaks the malicious instruction pool into vulnerability classes with representative CWE identifiers.

\section{Experiments}
\label{sec:experiments}

\subsection{Datasets}
\label{sec:datasets}

We evaluate on the malicious and benign instruction pools described in Section 5: $200$ \textsc{AuthSec} and $200$ \textsc{InfraCloud} malicious prompts, and a benign reference pool of $400$ prompts split across the same two domains. Additionally, we use $100$ prompts sampled uniformly at random from The Stack v2 \citep{DBLP:journals/corr/abs-2402-19173} as our curvature dataset.

\begin{figure}[t]
\centering
\begin{tabular}{ccc}
\includegraphics[width=0.30\linewidth]{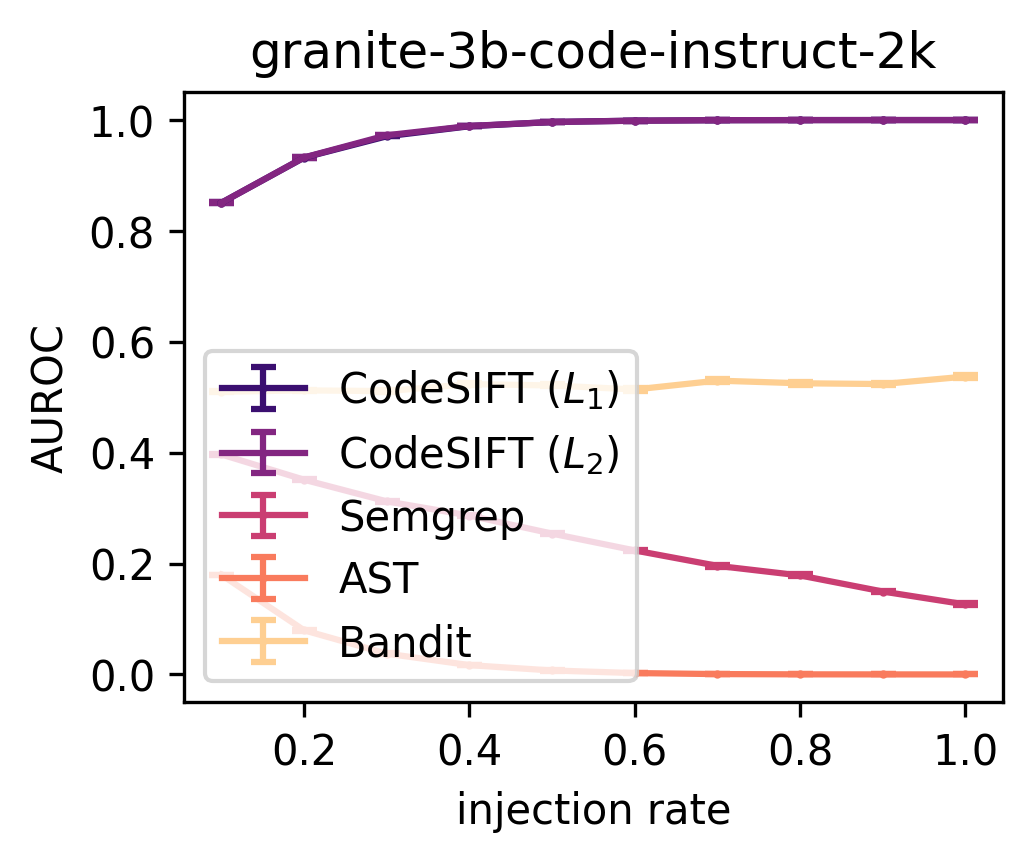} & \includegraphics[width=0.30\linewidth]{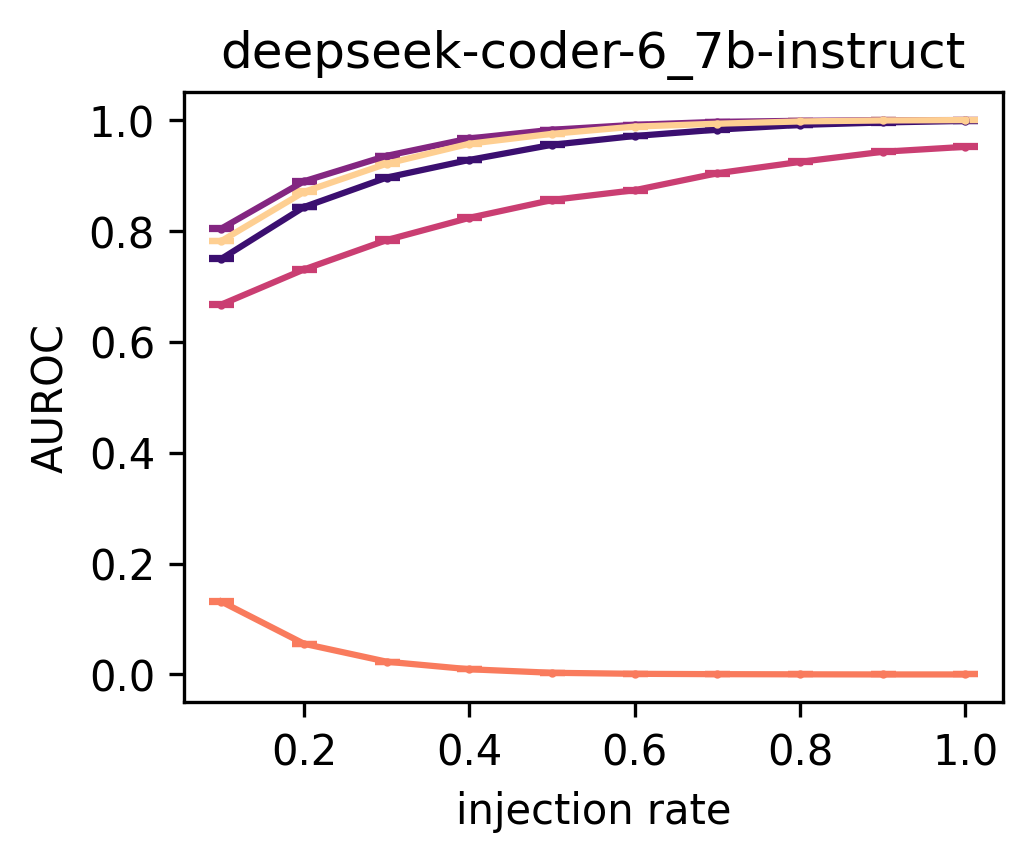} & \includegraphics[width=0.30\linewidth]{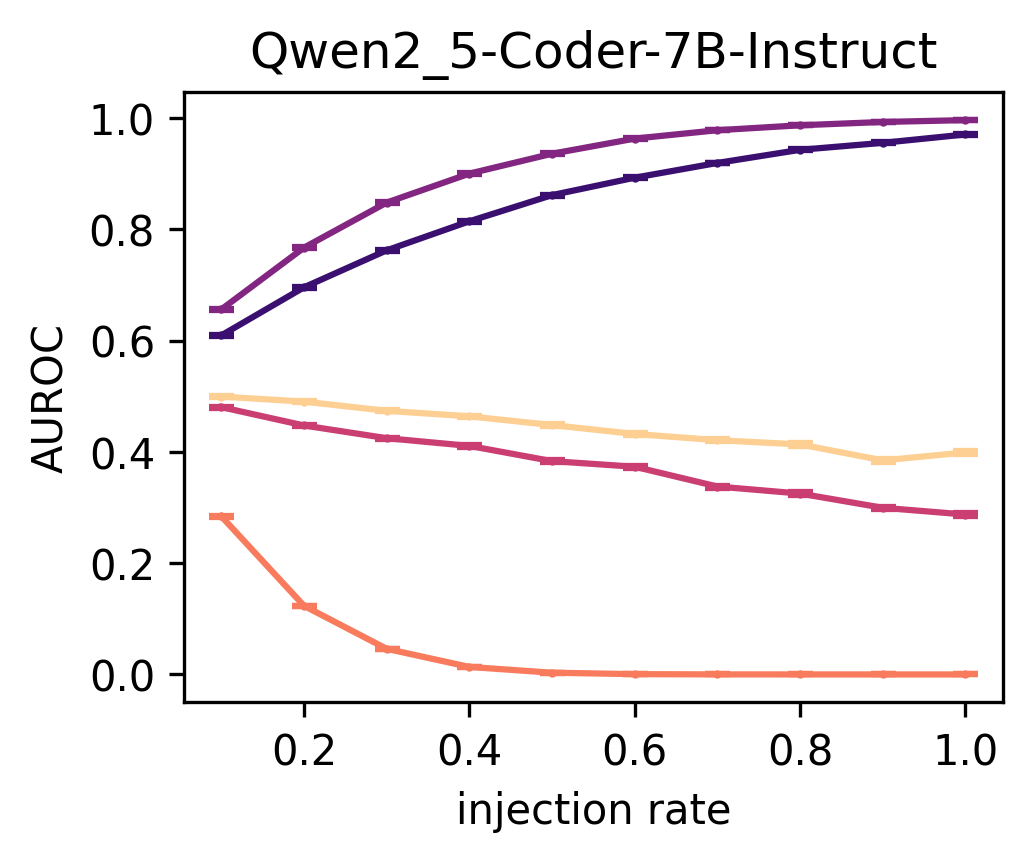} \\
\includegraphics[width=0.30\linewidth]{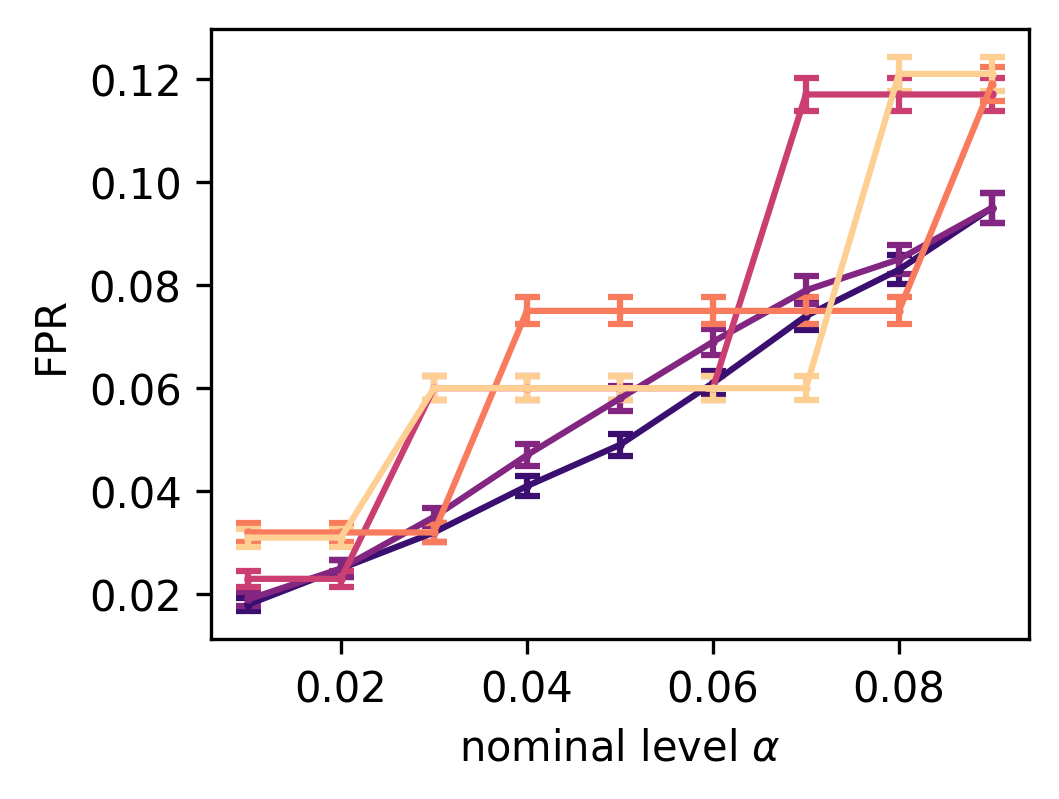} & \includegraphics[width=0.30\linewidth]{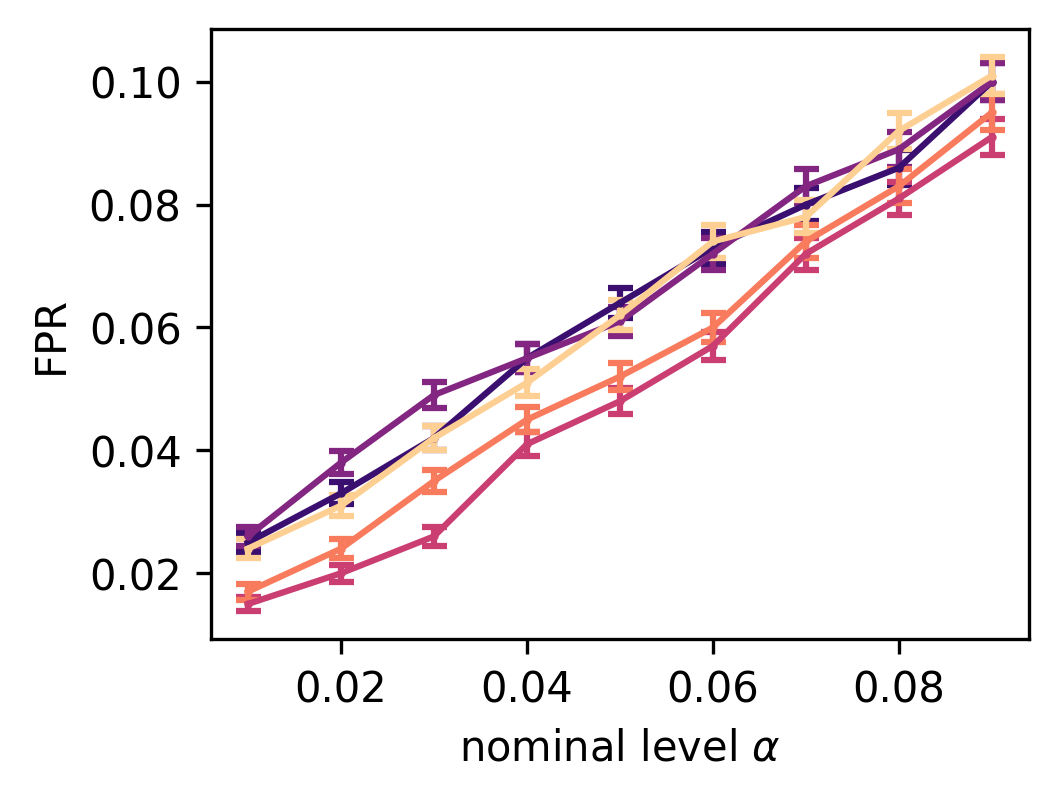} & \includegraphics[width=0.30\linewidth]{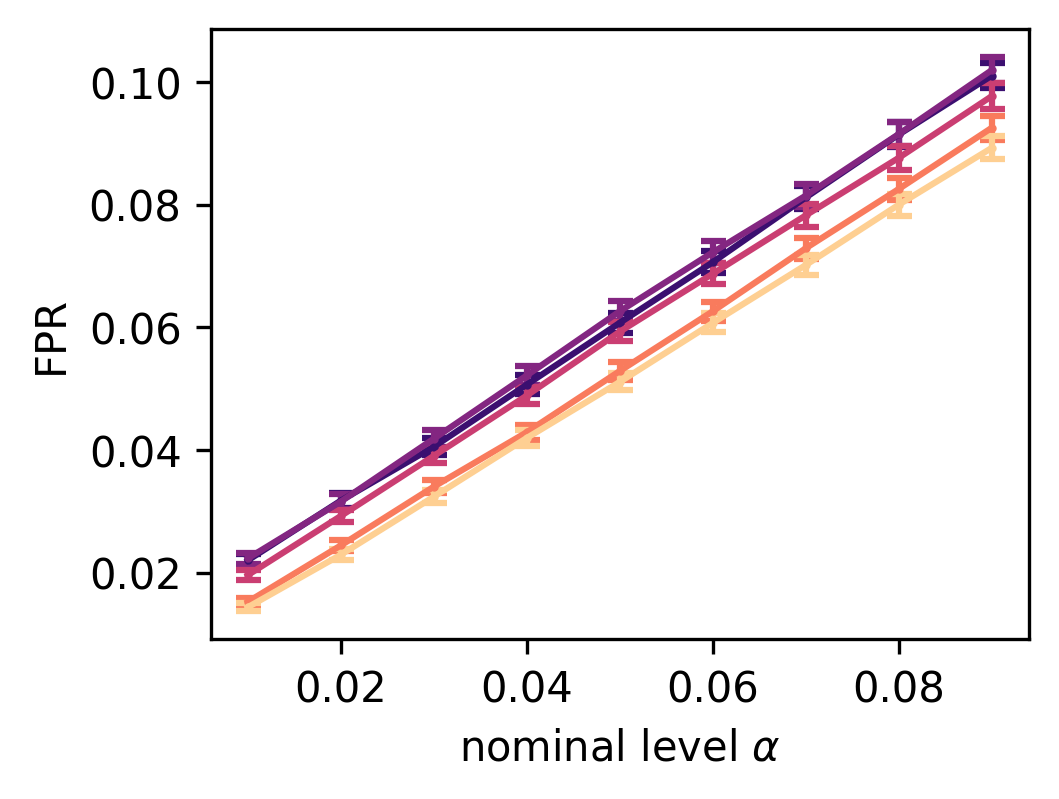} \\
\end{tabular}
\caption{Detection performance on \textsc{AuthSec} as a function of injection rate, for Granite-3B-Code-Instruct-2K, DeepSeek-Coder-6.7B-Instruct, and Qwen2.5-Coder-7B-Instruct (left to right). Top row, AUROC. Bottom row, empirical false positive rate against the nominal significance level $\alpha$. CodeSIFT approaches perfect AUROC as contamination increases and stays well calibrated across all three models, while the static baselines are inconsistent across models and often fall below chance.}
\label{fig:auth-contamination}
\end{figure}

\subsection{Models}
\label{sec:models}
\begin{enumerate}[label={$\bullet$},itemsep=0pt,topsep=0pt,leftmargin=2em]
    \item \textbf{Granite-3B-Code-Instruct-2K} \citep{granite-code}. Released by IBM as part of the Granite Code model family, it is pretrained on code and natural language and instruction-tuned for coding tasks, with a 2K-token context window. It has 3B parameters.
    \item \textbf{DeepSeek-Coder-6.7B-Instruct} \citep{deepseek-coder}. Released by DeepSeek AI, this model is pretrained on a large code-centric corpus and instruction-tuned for code generation and completion tasks. It has 6.7B parameters.
    \item \textbf{Qwen2.5-Coder-7B-Instruct} \citep{hui2024qwen2,qwen2}. Released by Alibaba as part of the Qwen2.5-Coder family, it is pretrained on a mixture of code and general text and instruction-tuned for coding tasks. It has 7B parameters.
\end{enumerate}
\subsection{Baselines}
\label{sec:baselines}

\begin{figure}[t]
\centering
\begin{tabular}{ccc}
\includegraphics[width=0.30\linewidth]{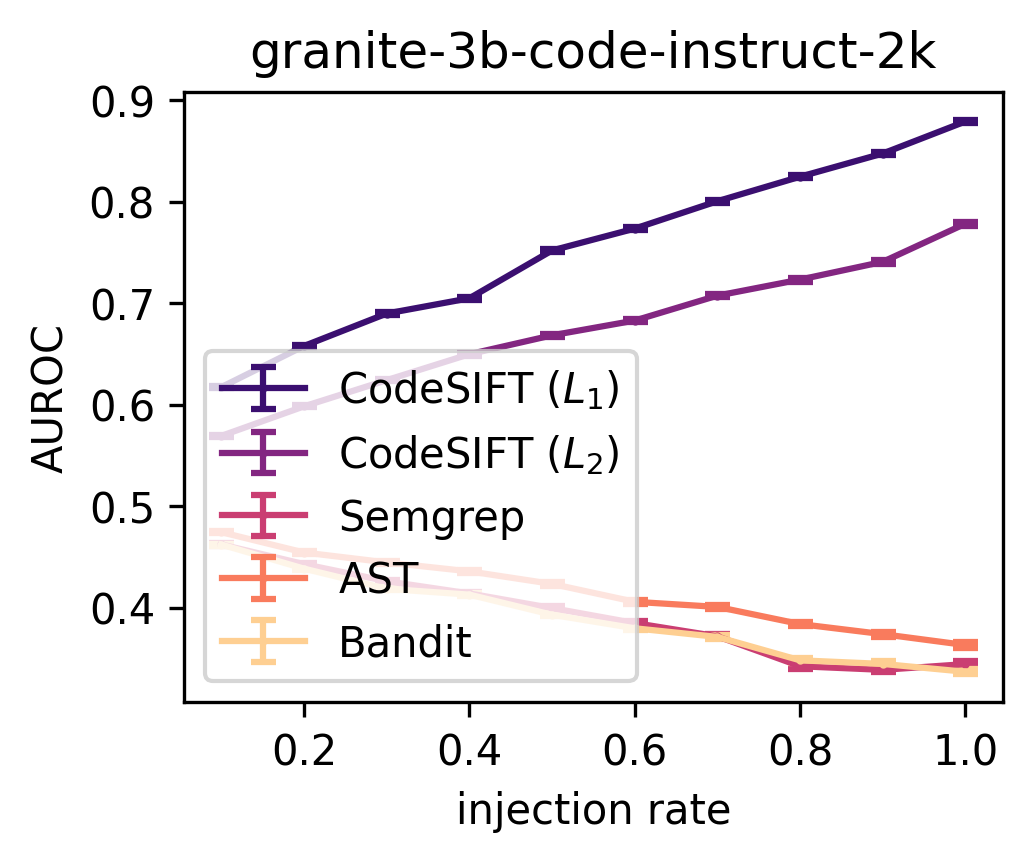} & \includegraphics[width=0.30\linewidth]{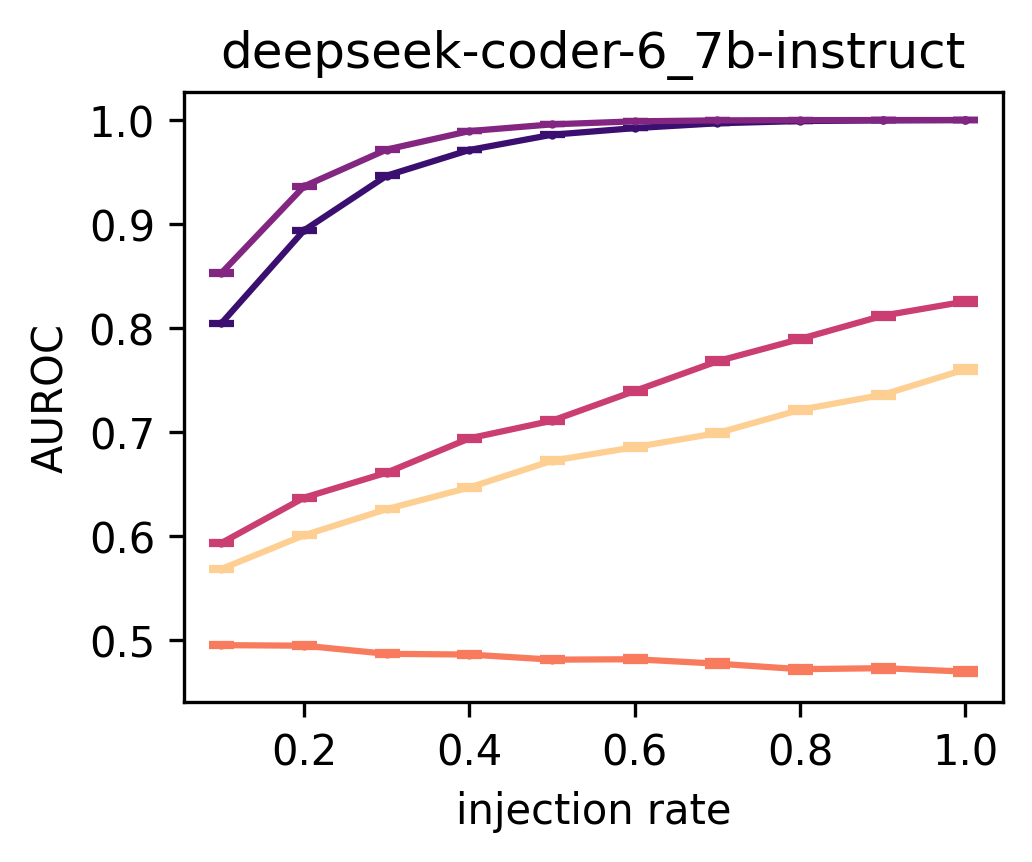} & \includegraphics[width=0.30\linewidth]{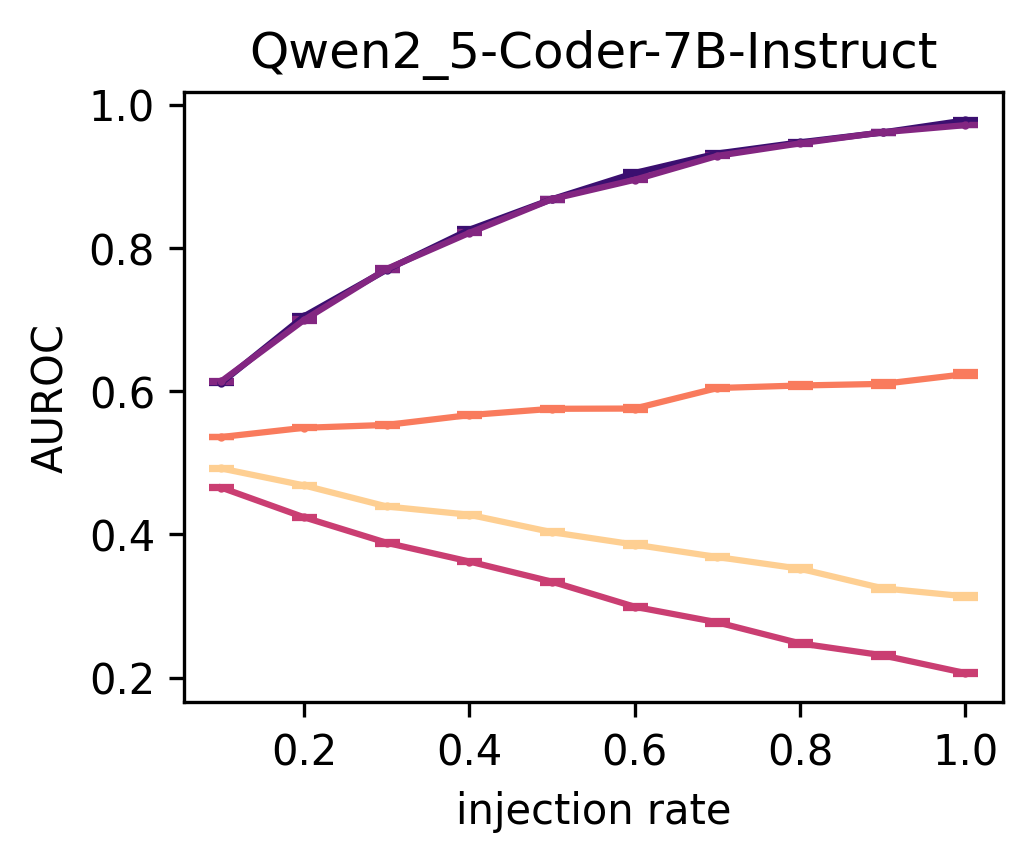} \\
\includegraphics[width=0.30\linewidth]{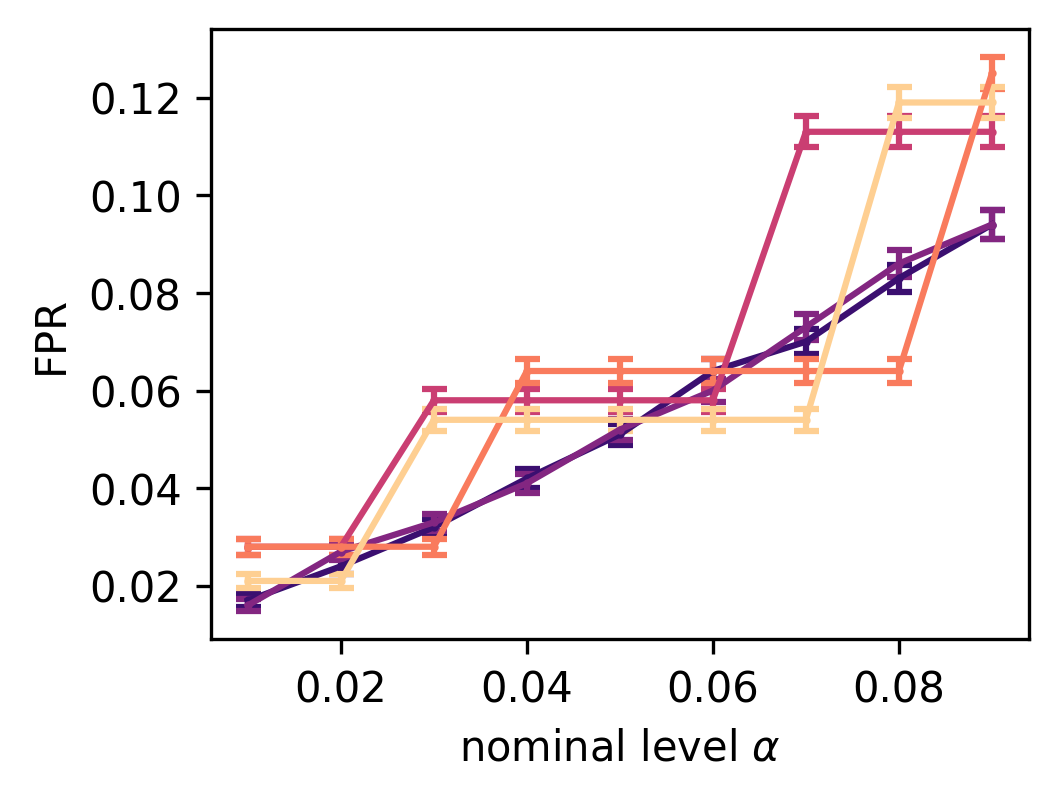} & \includegraphics[width=0.30\linewidth]{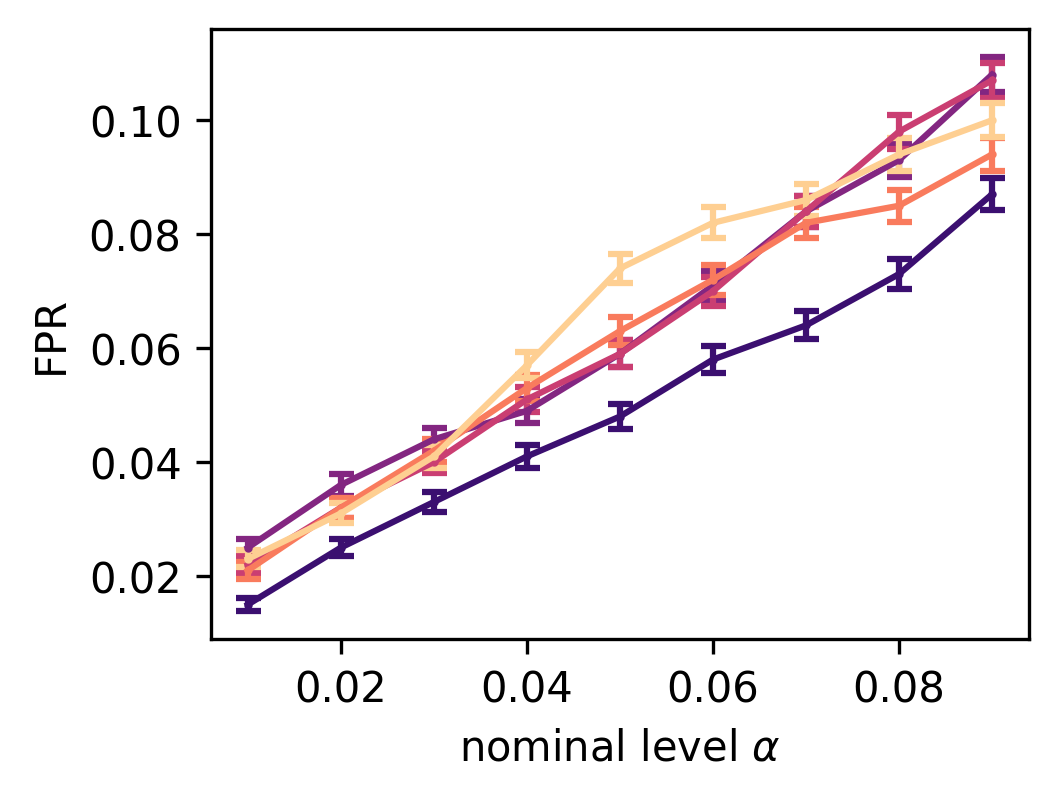} & \includegraphics[width=0.30\linewidth]{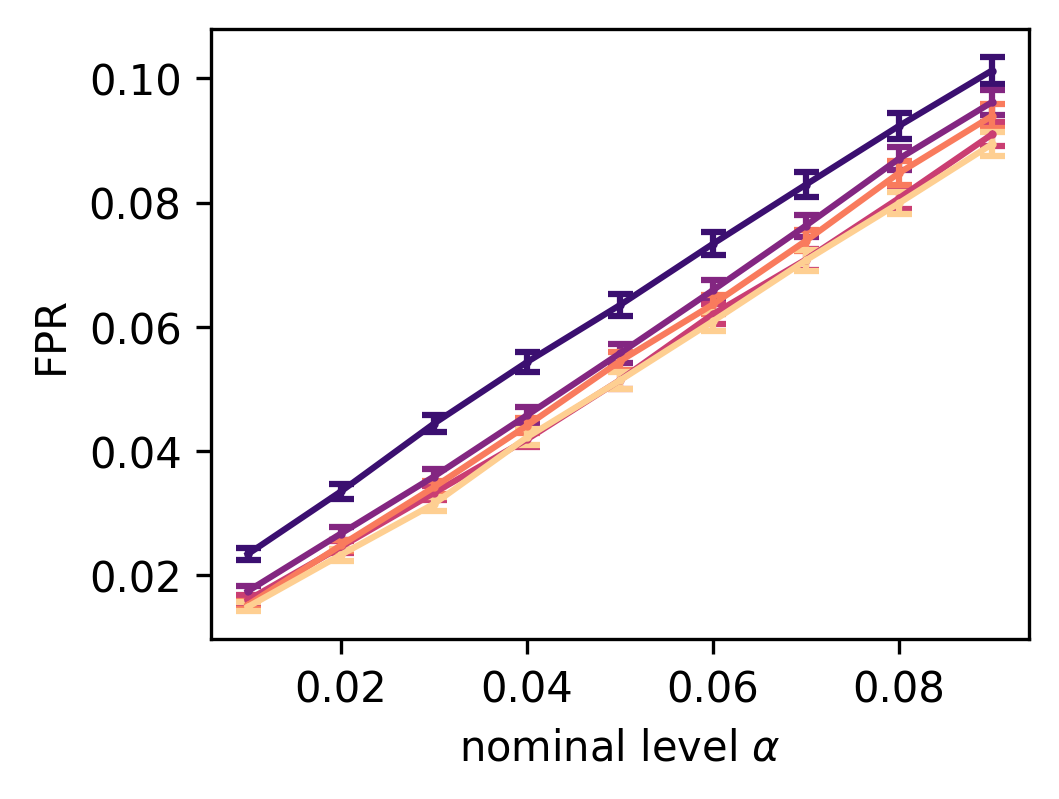} \\
\end{tabular}
\caption{Detection performance on \textsc{InfraCloud} as a function of injection rate, for Granite-3B-Code-Instruct-2K, DeepSeek-Coder-6.7B-Instruct, and Qwen2.5-Coder-7B-Instruct (left to right). Top row, AUROC. Bottom row, empirical false positive rate against the nominal significance level $\alpha$. \textsc{InfraCloud} is harder than \textsc{AuthSec}, so CodeSIFT's AUROC ceiling is lower, but it still improves monotonically with contamination and remains calibrated, while the baselines flatten or decline below chance.}
\label{fig:infrastructure-contamination}
\end{figure}

We compare against three static and pattern-based detectors, each producing a binary flag per generated prompt (positive if the detector reports at least one finding). We average each detector's flag over the candidate set $\mathtt D$ and reference set $\mathtt D^*$, then run the same statistical test used for CodeSIFT (a $t$-test as in Algorithm \ref{algorithm}) on these values to test whether $\mathtt D$'s flag rate is significantly elevated. This holds the testing procedure fixed across detectors, varying only the underlying signal.

\begin{enumerate}[label={$\bullet$},itemsep=0pt,topsep=0pt,leftmargin=2em]
    \item \textbf{Bandit} \citep{banditpycqa}. A static analysis security linter for Python that inspects the abstract syntax tree for common misuse patterns, such as use of \texttt{eval}, weak cryptographic primitives, hardcoded passwords, and insecure subprocess calls.
    \item \textbf{Semgrep} \citep{semgrep2024}. A pattern-based static analysis tool run with publicly available, off-the-shelf rule packs for Python web applications (covering Flask and Django), drawn from Semgrep's open-source rule registry rather than written by us. These rules encode generic secure-coding patterns, e.g., injection, insecure deserialization, weak cryptography, CSRF handling, and are not tailored to \textsc{AuthSec} or \textsc{InfraCloud}.
    \item \textbf{AST Taint Tracker}. A custom static taint-analysis baseline that tracks data flow, via Python's \texttt{ast} module, from Flask request sources (\texttt{request.args}, \texttt{request.json}, \texttt{request.form}, and related attributes) to security-sensitive sinks, including SQL execution, command execution, deserialization, SSRF, XSS, and NoSQL or LDAP queries.
\end{enumerate}

To the best of our knowledge, no established tool targets the threat detection problem studied in this work, so two baselines are repurposed general-purpose linters and the third we implement ourselves. All three are built agnostically: we assume the detector has no advance knowledge of the injected attack, and avoid tuning any baseline to \textsc{AuthSec}'s or \textsc{InfraCloud}'s specific vulnerability classes, mirroring CodeSIFT's own no-threat-model-knowledge assumption (Section \ref{sec:algorithm}).

\subsection{Compute}
All experiments are run on a single node with four NVIDIA RTX A6000 GPUs
(48GB each, driver 570.211.01, CUDA 12.8). Granite-3B-Code-Instruct-2K and
DeepSeek-Coder-6.7B-Instruct each run on a single GPU. Qwen2.5-Coder-7B-Instruct
is split across two GPUs to accommodate its larger memory footprint. Because dense Kronecker factors can be memory-intensive and numerically delicate in low-precision training \citep{lin2024structured}, gradient outer products are formed in \texttt{bfloat16} while the accumulated factors $\boldsymbol{A}_j$ and $\boldsymbol{G}_j$ are stored in \texttt{float32}. When multiple GPUs are available, per-projection factors are distributed across devices by assigning projection $j$ to device $j \bmod N_{\mathrm{gpu}}$ \citep{lu2026runtimeorchestratedsecondorderoptimizationscalable}, spreading the curvature state across devices rather than pinning it to a single GPU.

The hyperparameters for the experiments are presented in Appendix \ref{sec:hyperparameters}.
\subsection{Results}
\label{sec:results}

Figures~\ref{fig:auth-contamination} and~\ref{fig:infrastructure-contamination} report detection performance on \textsc{AuthSec} and \textsc{InfraCloud} respectively, as a function of the injection rate, for CodeSIFT under the $\ell_1$ and $\ell_2$ norm and for the three static baselines, across all three models.

\noindent\textbf{Detection accuracy.} CodeSIFT improves monotonically with injection rate in every model and dataset combination we test. On \textsc{AuthSec}, AUROC already exceeds 0.6 to 0.85 at the lowest injection rate we evaluate, and saturates to essentially 1.0 once roughly 40 to 50 percent of the candidate set is malicious. On \textsc{InfraCloud} the task is harder, AUROC rises more gradually and peaks between 0.88 and 0.98 depending on the model, which is consistent with \textsc{InfraCloud} spanning a broader and more heterogeneous set of vulnerability classes than \textsc{AuthSec}. The baselines show no comparably reliable signal. Their performance is inconsistent across models rather than merely weaker, Bandit is close to CodeSIFT on DeepSeek-Coder ($\approx$0.76 to 1.0 AUROC) but stays near chance on Granite and Qwen, while Semgrep and the AST Taint Tracker frequently fall below 0.5 AUROC as the injection rate grows, meaning their flags become anti-correlated with the true label rather than merely uninformative. We take this as evidence that static, pattern-based detectors latch onto surface regularities that do not transfer across models, whereas the parameter-space influence signal that CodeSIFT relies on does.

\noindent\textbf{False positive control.} We check calibration by comparing the empirical false positive rate of the statistical test against the nominal significance level $\alpha$ under the null hypothesis, that is, when the candidate set is not contaminated. For CodeSIFT, the empirical FPR tracks $\alpha$ closely across the full range we test (0.01 to 0.09) for all three models on both datasets, indicating the test is well calibrated rather than systematically over- or under-rejecting. The baselines are only as well calibrated as CodeSIFT on some model and dataset combinations, for example DeepSeek-Coder on \textsc{AuthSec}, but on others their FPR curves show a step-like pattern that departs from the diagonal, which we attribute to their binary per-sample flag admitting only a coarse set of achievable proportions, unlike CodeSIFT's continuous influence score.

\noindent\textbf{Additional experiments.} Appendix~\ref{sec:appendix-sweeps} reports the same metrics as a function of candidate set size and number of sampled completions per prompt, and jointly across pairs of these axes, for every detector, model, and dataset combination.

\section{Discussion} 
Across both datasets and all three models tested, CodeSIFT detects contaminated prompt batches without any assumption on the threat model, reaching AUROC rises monotonically with the fraction of malicious prompts in a batch, reaching up to 
$0.98$ once contamination is moderate to high, and remains well calibrated throughout (Section \ref{sec:experiments}). Performance is weaker, though still generally above chance, in the hardest regime of small candidate batches (Appendix~\ref{sec:appendix-sweeps}). Meanwhile, the static baselines are inconsistent across models and frequently uninformative or worse. This indicates that parameter-space influence carries a detection signal that transfers across model families in a way surface-level pattern matching does not, and does so without requiring the vulnerability class to be known in advance. We note a few directions the current results do not yet cover. CodeSIFT is designed as a batch-level test; extending it to localize individual malicious prompts within a batch is a natural next step. Our evaluation spans models up to $7$B parameters, consistent with the compute budget for this study, and prior work on influence functions at up to $52$B parameters \citep{grosse2023llm} suggests the approach should scale further. Similarly, extending the benchmark beyond Python and the vulnerability families studied here would help establish the generality of the method across languages and attack surfaces.

\section{Acknowledgments}
The authors gratefully acknowledge James Dixon for his assistance with code development and implementation.

\bibliographystyle{apalike}
\bibliography{references}
\newpage
\renewcommand{\thesection}{\Alph{section}}
\setcounter{section}{0}
\noindent {\LARGE\textbf{Appendix}}
\section{Missing Proofs}
\label{appendix: missing_proof}
\begin{proposition}
\label{prop:pbrf-linearized}
Assume the loss $\mathcal L(\cdot,\cdot)$ is convex as a function of the network outputs. Then, it holds
\begin{equation}
    \hat{r}_{z,\mathrm{lin}}(\varepsilon)
    \approx
    \arg\min_{\theta \in \mathbb{R}^d}
    \left[
    \frac{1}{N}\sum_{i=1}^{N}
    D_{\mathcal L}(z^{(i)}, \theta, \hat{\theta})
    +
    \varepsilon\,\mathcal L(\theta, x^{(i)})
    +
    \frac{\lambda}{2}
    \|\theta-\hat{\theta}\|^2
    \right],
\end{equation}
where $D_{\mathcal L}^{\quad}$ is the Bregman divergence of the loss, defined as
\[
D_{\mathcal L}(z^{(i)}, \theta, \hat{\theta}) = \mathcal L(\theta, z^{(i)}) - \mathcal L(\hat{\theta}, z^{(i)}) - \nabla \mathcal L(\hat{\theta}, z^{(i)})^\top (f(z^{(i)}, \theta) - f(z^{(i)}, \hat{\theta})),\footnote{With a mild abuse of notation, in this equation we denote with $f(z^{(i)}, \theta)$ and $f(z^{(i)}, \hat{\theta})$ the networks' predicted output distribution.}
\]
with the gradient $\nabla \mathcal L(\hat{\theta}, z)$ taken w.r.t. the network's output distributions.
\end{proposition}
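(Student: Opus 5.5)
Because the statement asserts only an approximation ($\approx$), the natural route is the one of \citet{bae2022pbrf}. We take a second-order Taylor expansion of the PBRF objective around $\hat{\theta}$, solve the resulting quadratic problem in closed form, and compare the solution with \eqref{eq:approximation}. We read the second term as $\varepsilon\,\mathcal L(\theta,z)$, since the $x^{(i)}$ in the statement is a typo for the upweighted sequence $z$. We also take the damping $\lambda\to 0$, or absorb $\lambda$ into the curvature, as in the damped factors of Section~\ref{sec:kfac}.

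\textbf{Step 1 (Bregman term).} Write $u_i(\theta)=f(z^{(i)},\theta)$ for the network outputs and $\ell_i(u)$ for the loss as a function of outputs, so that $\mathcal L(\theta,z^{(i)})=\ell_i(u_i(\theta))$. By definition $D_{\mathcal L}(z^{(i)},\theta,\hat\theta)=\ell_i(u_i(\theta))-\ell_i(u_i(\hat\theta))-\nabla\ell_i(u_i(\hat\theta))^\top(u_i(\theta)-u_i(\hat\theta))$. This is the Bregman divergence of $\ell_i$ between $u_i(\theta)$ and $u_i(\hat\theta)$, so it vanishes at $\theta=\hat\theta$. Its gradient in $\theta$ is $J_i(\theta)^\top\bigl(\nabla\ell_i(u_i(\theta))-\nabla\ell_i(u_i(\hat\theta))\bigr)$, where $J_i$ is the output Jacobian, and this also vanishes at $\hat\theta$. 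Differentiating once more at $\hat\theta$, the term involving second derivatives of $u_i$ is multiplied by $\nabla\ell_i(u_i(\theta))-\nabla\ell_i(u_i(\hat\theta))=0$ and drops out. Hence the Hessian at $\hat\theta$ is exactly the Gauss--Newton matrix $J_i^\top\nabla^2\ell_i J_i$. Convexity of $\ell_i$ in the outputs makes this matrix positive semidefinite. Averaging over $i$ gives $\tfrac1N\sum_i D_{\mathcal L}\approx\tfrac12(\theta-\hat\theta)^\top\mathbf G(\theta-\hat\theta)$, with $\mathbf G$ the Gauss--Newton matrix of $\mathcal J$.

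\textbf{Step 2 (remaining terms and closed form).} Linearize the upweighted term as $\varepsilon\mathcal L(\theta,z)\approx\varepsilon\mathcal L(\hat\theta,z)+\varepsilon\nabla_\theta\mathcal L(\hat\theta,z)^\top(\theta-\hat\theta)$. Dropping its second-order part is consistent because that part carries a factor $\varepsilon$ and therefore contributes at order $\varepsilon^2$ to the minimizer. The proximal term is already quadratic. The surrogate objective is then strictly convex for $\lambda>0$, and setting its gradient to zero gives
\[
\theta_\varepsilon=\hat\theta-\varepsilon(\mathbf G+\lambda\mathbf I)^{-1}\nabla_\theta\mathcal L(\hat\theta,z).
\]
Identifying $\mathbf H_{\hat\theta}$ with the Gauss--Newton (Fisher) curvature $\mathbf G$, plus damping, recovers \eqref{eq:approximation}. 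This identification is exactly the replacement made in Section~\ref{sec:kfac}.

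\textbf{Step 3 (error control and main obstacle).} The minimizer of the true PBRF objective agrees with $\theta_\varepsilon$ to first order in $\varepsilon$. To show this, apply the implicit function theorem to the stationarity condition at $\varepsilon=0$, where $\theta=\hat\theta$ is a stationary point. The Jacobian of that condition is $\mathbf G+\lambda\mathbf I$, which is invertible by convexity plus damping. The derivative $d\theta/d\varepsilon$ at $0$ then equals $-(\mathbf G+\lambda \mathbf I)^{-1}\nabla\mathcal L(\hat\theta,z)$.

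The main obstacle is this last step. One must argue that the true PBRF minimizer is locally unique and lies near $\hat\theta$, and that the gap between the true Hessian $\mathbf H_{\hat\theta}$ and $\mathbf G$ is acceptable. That gap is the term weighted by the residual gradients $\nabla\ell_i$ at $\hat\theta$, which need not vanish when $\hat\theta$ is not an exact optimum. I would handle it as Bae et al.\ do: state the result as exact for $\mathbf G$, and note that the approximation in the proposition refers precisely to this Gauss--Newton substitution. The sign change relative to \citet{bae2022pbrf}, which comes from upweighting rather than downweighting $z$, simply propagates through the linear term as $+\varepsilon$.
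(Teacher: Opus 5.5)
Your proposal is correct and follows the paper's proof. Both reduce the averaged Bregman term to the Gauss--Newton quadratic form, keep only the first-order part of the upweighted term, and solve the damped stationarity condition $(G_{\hat\theta}+\lambda I)(\theta-\hat\theta)=-\varepsilon\nabla_\theta\mathcal L(\hat\theta,z)$ in closed form, using convexity in the outputs and $\lambda>0$ for invertibility. The differences are minor: you obtain the Gauss--Newton matrix as the exact parameter-space Hessian of the Bregman term at $\hat\theta$, where the paper uses a quadratic expansion in output space plus a linearization of the network (the two give the same quadratic model), and your implicit-function-theorem step adds a first-order-in-$\varepsilon$ justification that the paper leaves implicit.
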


\begin{proof}
In order to simplify the notation, we denote with $\hat{y}^{(i)}$ the activations of $f(\cdot, \hat{\theta})$ for point $z^{(i)}$. Note that $\hat{y}^{(i)}$ is fixed for input $z^{(i)}$, even if the output model is stochastic. Similarly, we denote with $y^{(i)}$ the activations of $f(\cdot, \theta)$ for point $z^{(i)}$. Consider the second-order quadratic expansion of the loss $\mathcal L$ around $\hat{y}^{(i)}$, defined as
\begin{equation}
\label{eq:loss-quad}
\mathcal L_{\mathrm{quad}}(y)
  \coloneqq \mathcal L(\hat{y}^{(i)})
    + \nabla_y \mathcal L(\hat{y}^{(i)})^\top (y^{(i)} - \hat{y}^{(i)})
    + (y^{(i)} - \hat{y}^{(i)})^\top \nabla_y^2 \mathcal L(\hat{y}^{(i)})(y^{(i)} - \hat{y}^{(i)}).
\end{equation}
Consider the expected linearized network output around $\hat{\theta}$ of this function, defined as
\begin{equation*}
f(z^{(i)}, \theta) \approx f(z^{(i)}, \hat{\theta}) + J_{\theta}f(z^{(i)}, \hat{\theta}) (\theta - \hat{\theta}).
\end{equation*}
Note that it holds
\begin{equation}
\label{eq:output-lin}
y^{(i)} - \hat{y}^{(i)} = f(z^{(i)}, \theta) - f(z^{(i)}, \hat{\theta}) \approx J_{\theta}f(z^{(i)}, \hat{\theta}) (\theta - \hat{\theta}).
\end{equation}
\noindent\textbf{Step 1: The Bregman term reduces to a quadratic form.}
Computing the Bregman divergence of the quadratic approximation of the loss as in \eqref{eq:loss-quad}, it holds
\begin{align}
D_{\mathcal L_\mathrm{quad}}(y^{(i)}, \hat{y}^{(i)}) & = \mathcal L_\mathrm{quad} (y^{(i)}) - \mathcal L_\mathrm{quad} (\hat{y}^{(i)}) - \nabla_{y}\mathcal L_\mathrm{quad}(\hat{y}^{(i)})^\top (y^{(i)} - \hat{y}^{(i)})\nonumber \\
  & = \mathcal L(\hat{y}^{(i)})    + \nabla_y \mathcal L(y^{(i)})^\top (y^{(i)} - \hat{y}^{(i)})    + (y^{(i)} - \hat{y}^{(i)})^\top \nabla_y^2 \mathcal L(\hat{y}^{(i)})(y^{(i)} - \hat{y}^{(i)}) - \mathcal L(\hat{y}^{(i)})\nonumber  \\
    & \quad - \nabla_{y}\mathcal L_\mathrm{quad}(\hat{y}^{(i)})^\top (y^{(i)} - \hat{y}^{(i)})\nonumber \\
 & = \mathcal L(\hat{y}^{(i)})
    + \nabla_y \mathcal L(\hat{y}^{(i)})^\top (y^{(i)} - \hat{y}^{(i)})
    + (y^{(i)} - \hat{y}^{(i)})^\top \nabla_y^2 \mathcal L(\hat{y}^{(i)})(y^{(i)} - \hat{y}^{(i)}) - \mathcal L(\hat{y}^{(i)})\nonumber \\
    & \quad - \nabla_{y}\mathcal L(\hat{y}^{(i)})^\top (y^{(i)} - \hat{y}^{(i)})\nonumber \\
  & = (y^{(i)} - \hat{y}^{(i)})^\top \nabla_y^2 \mathcal L(\hat{y}^{(i)})(y^{(i)} - \hat{y}^{(i)}) \label{last-Bregman}
\end{align}
Substituting the linearized output \eqref{eq:output-lin} in \eqref{last-Bregman} gives
\begin{equation}
\label{eq:bregman-quad}
D_{\mathcal L_\mathrm{quad}}(y^{(i)}, \hat{y}^{(i)}) 
  \approx (\theta - \hat{\theta})^\top J_{\theta}f(z^{(i)}, \hat{\theta})^\top
    \nabla_y^2 \mathcal L(\hat{y}^{(i)})\, J_{\theta}f(z^{(i)}, \hat{\theta}) (\theta - \hat{\theta}).
\end{equation}

\noindent\textbf{Step 2: Stationarity condition.}
Averaging \eqref{eq:bregman-quad} over $i = 1, \dots, N$ and substituting into the linearized
PBRF objective, we differentiate with respect to $\theta$ and set
the gradient to zero:
\begin{align}
0 &= \frac{1}{N}\sum_{i=1}^N J_{\theta}f(z^{(i)}, \hat{\theta})^\top
    \nabla_y^2 \mathcal L(\hat{y}^{(i)})\, J_{\theta}f(z^{(i)}, \hat{\theta}) (\theta - \hat{\theta})\;+\; \epsilon \nabla_\theta \mathcal L(\hat{\theta},z)\,
      \;+\; \lambda (\theta - \hat{\theta}) \nonumber \\
  &= G_{\hat{\theta}}(\theta - \hat{\theta})
      \;+\; \epsilon \nabla_\theta \mathcal L(\hat{\theta},z)
      \;+\; \lambda (\theta - \hat{\theta}), \label{eq:GNH}
\end{align}
where $G_{\hat{\theta}}$ is an approximation of the Gauss-Newton Hessian at $\hat{\theta}$.

\noindent\textbf{Step 3: Solve for $\theta$.}
Rearranging \eqref{eq:GNH},
\begin{equation}
\bigl(G_{\hat{\theta}} + \lambda I\bigr)(\theta - \hat{\theta}) = - \epsilon \nabla_\theta \mathcal L(\hat{\theta},z),
\end{equation}
and since $\mathcal L$ is convex as a function of the network outputs, then $G_{\hat{\theta}} \succeq 0$; adding $\lambda I$ with $\lambda > 0$ guarantees invertibility. Solving
for $\theta$ therefore yields the unique minimizer
\begin{equation}
\hat{r}_{z,\mathrm{lin}}(\varepsilon)
  = \hat{\theta} - \bigl(G_{\hat{\theta}} + \lambda I\bigr)^{-1} \nabla_\theta \mathcal L(\hat{\theta},z)\,\epsilon,
\end{equation}
as claimed.
\end{proof}
\section{Datasets vulnerability coverage}
Table~\ref{tab:vuln-classes} groups the malicious samples by vulnerability class, with representative CWE IDs for each.
\begin{table}[p]
    \centering
    \scriptsize
    \caption{Vulnerability classes in the malicious pool. Classes are listed in
    descending order of frequency within each family; each family totals 200
    records. CWE identifiers are \emph{representative}: each names the dominant
    weakness in its class, not a per-record label, and several classes span more
    than one entry. Rows marked \texttt{--} aggregate a long tail of weaknesses
    that do not map cleanly to a single identifier.}
    \label{tab:vuln-classes}
    \begin{tabular}{@{}p{0.74\linewidth}lr@{}}
                \toprule
        \textbf{Vulnerability class} & \textbf{Rep. CWE} & \textbf{Records} \\
        \midrule
        \multicolumn{3}{@{}l}{\emph{Authentication, authorisation, and session security}} \\[1pt]
        \quad Credential \& secret exposure & CWE-200 & 23 \\
        \quad Broken access control / IDOR & CWE-639 & 22 \\
        \quad Federated auth / SSO flaws & CWE-287 & 19 \\
        \quad Session management & CWE-384 & 19 \\
        \quad JWT verification flaws & CWE-347 & 17 \\
        \quad Business-logic flaws & CWE-841 & 17 \\
        \quad Weak cryptography \& randomness & CWE-327 & 16 \\
        \quad Cookie security & CWE-1004 & 12 \\
        \quad Brute-force \& MFA bypass & CWE-307 & 9 \\
        \quad Cross-site request forgery & CWE-352 & 8 \\
        \quad Password storage and hashing & CWE-916 & 7 \\
        \quad Timing \& enumeration side channels & CWE-208 & 7 \\
        \quad Password reset flows & CWE-640 & 6 \\
        \quad Password policy & CWE-521 & 6 \\
        \quad Other authentication failures & -- & 6 \\
        \quad Race conditions & CWE-362 & 6 \\
        \quad\textbf{Subtotal} & & \textbf{200} \\
        \midrule
        \multicolumn{3}{@{}l}{\emph{Infrastructure, cloud, and protocol exploitation}} \\[1pt]
        \quad Information disclosure & CWE-538 & 26 \\
        \quad Other infrastructure failures & -- & 21 \\
        \quad Server-side request forgery & CWE-918 & 18 \\
        \quad HTTP protocol abuse \& smuggling & CWE-444 & 17 \\
        \quad CI/CD \& supply-chain attacks & CWE-1357 & 15 \\
        \quad Cache poisoning \& deception & CWE-349 & 15 \\
        \quad Security-header \& framing flaws & CWE-1021 & 12 \\
        \quad TOCTOU \& file-race conditions & CWE-367 & 12 \\
        \quad Denial of service & CWE-400 & 10 \\
        \quad Container \& orchestration escape & CWE-269 & 9 \\
        \quad Injection into cloud service APIs & CWE-943 & 9 \\
        \quad Protocol downgrade \& confusion & CWE-757 & 9 \\
        \quad Cloud credential \& secret leakage & CWE-522 & 7 \\
        \quad DNS \& network-level attacks & CWE-346 & 7 \\
        \quad WebSocket security & CWE-1385 & 5 \\
        \quad CORS misconfiguration & CWE-942 & 4 \\
        \quad Cloud permission misconfiguration & CWE-732 & 4 \\
        \quad\textbf{Subtotal} & & \textbf{200} \\
        \bottomrule
    \end{tabular}
\end{table}

\section{Hyperparameters for the Main Experiments}
\label{sec:hyperparameters}
Unless otherwise noted, all experiments use the following fixed hyperparameters across models,
datasets, and injection rates; Appendix~\ref{sec:appendix-sweeps} instead reports the effect of sweeping candidate-set size,
sample count, and injection rate directly as figure axes.

\begin{table}[h]
\centering
\caption{Hyperparameters used across all experiments (Algorithm~\ref{algorithm} and Section~\ref{sec:preliminaries} notation in
parentheses).}
\label{tab:hyperparams}
\begin{tabular}{ll}
\toprule
Hyperparameter & Value \\
\midrule
Upweighting factor ($\varepsilon$) & $1\times10^{-3}$ \\
Samples per prompt ($N$) & 25 \\
Base K-FAC damping & $0.1$ \\
Relative damping coefficient (rel\_eps) & $0.1$ \\
Damping floor ($\gamma$, Granite-3B-Code-Instruct-2K, DeepSeek-Coder-6.7B-Instruct) & $1\times10^{-3}$ \\
Damping floor ($\gamma$, Qwen2.5-Coder-7B-Instruct) & $1\times10^{-2}$ \\
Significance threshold ($\alpha$, detection decision) & $0.01$ \\
Significance threshold sweep ($\alpha$, FPR calibration) & $0.01$--$0.10$ \\
\bottomrule
\end{tabular}
\end{table}

We fix $\varepsilon = 10^{-3}$ and average $N=25$ sampled responses per prompt when estimating
$\text{IF}_{p,\varepsilon}(z)$. K-FAC factors are damped with a base coefficient of $0.1$, refined
per-projection via a relative damping term of $0.1$ and a damping floor $\gamma$; we use
$\gamma = 10^{-3}$ for Granite-3B-Code-Instruct-2K and DeepSeek-Coder-6.7B-Instruct, and
$\gamma = 10^{-2}$ for Qwen2.5-Coder-7B-Instruct, reflecting differences in curvature scale across
architectures. The headline detection results use a fixed significance threshold of
$\alpha = 0.01$. To assess calibration, Section~6.5 additionally sweeps the nominal significance
level from $0.01$ to $0.10$ and reports the resulting empirical false positive rate.
\section{Additional experimental results}
\label{sec:appendix-sweeps}

\subsection{Overview across dataset size and sample count}
Figures~\ref{fig:auth-standalone1}-\ref{fig:auth-standalone2} and~\ref{fig:infrastructure-standalone1}-\ref{fig:infrastructure-standalone2} plot AUROC as a function of dataset size, and separately as a function of the number of sampled completions per prompt, with all five detectors overlaid on the same axes. On AuthSec, CodeSIFT reaches near-ceiling AUROC with as few as 20 to 30 prompts in the candidate set, and its accuracy is essentially unaffected by how many completions are sampled per prompt, as few as 5 samples already suffice. On InfraCloud the same pattern holds at a lower ceiling, consistent with the harder overall task discussed in Section~\ref{sec:results}. The three static baselines are flat or slowly declining along both axes, they do not benefit from a larger candidate set or from additional sampled completions the way CodeSIFT does.

\begin{figure}[ht]
\centering
\begin{tabular}{ccc}
\includegraphics[width=0.30\linewidth]{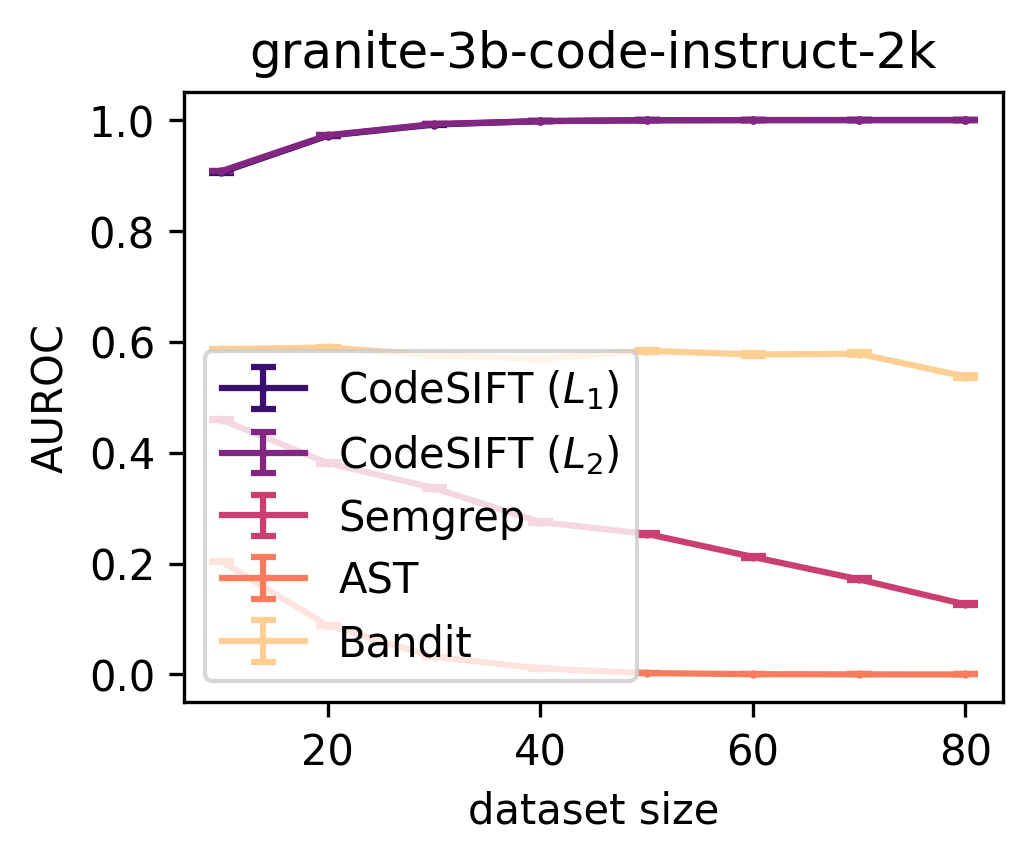} & \includegraphics[width=0.30\linewidth]{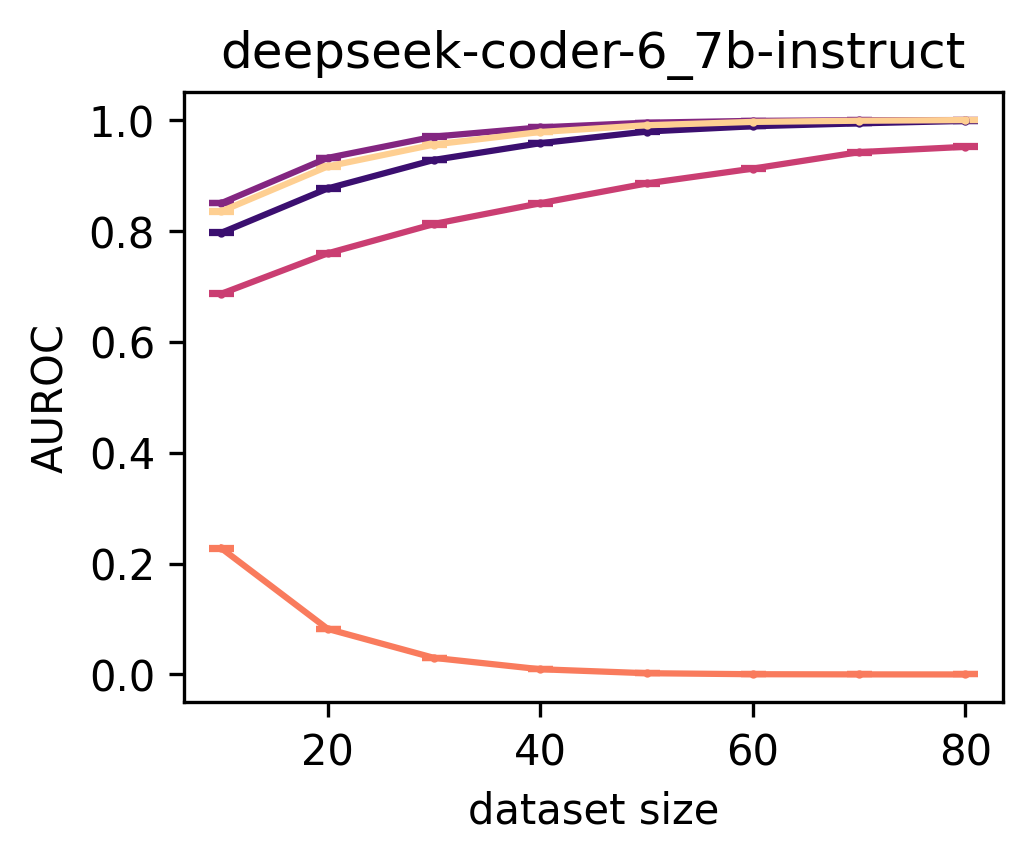} & \includegraphics[width=0.30\linewidth]{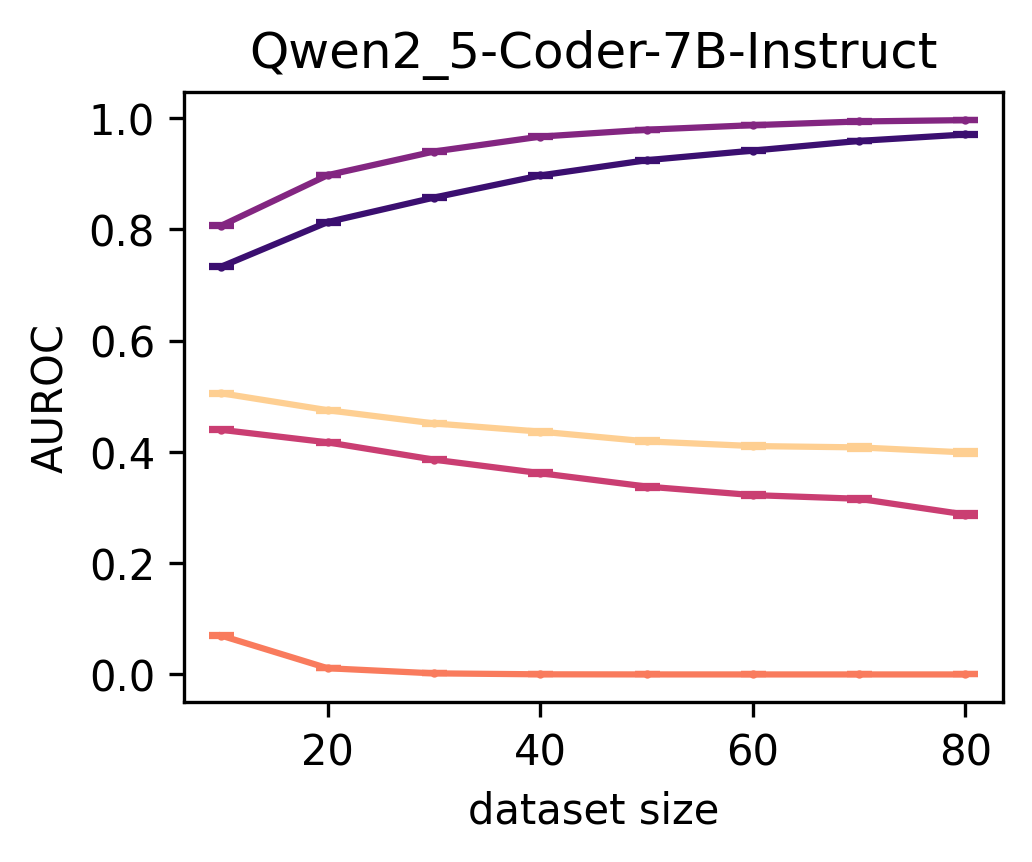} \\
\small Granite-3B-Code-Instruct-2K & \small DeepSeek-Coder-6.7B-Instruct & \small Qwen2.5-Coder-7B-Instruct \\
\end{tabular}
\caption{AUROC on AuthSec as a function of dataset size, for CodeSIFT ($L_1$, $L_2$) and the three static baselines jointly. Columns correspond to Granite-3B-Code-Instruct-2K, DeepSeek-Coder-6.7B-Instruct, and Qwen2.5-Coder-7B-Instruct.}
\label{fig:auth-standalone1}
\end{figure}

\begin{figure}[ht]
\centering
\begin{tabular}{ccc}
\includegraphics[width=0.30\linewidth]{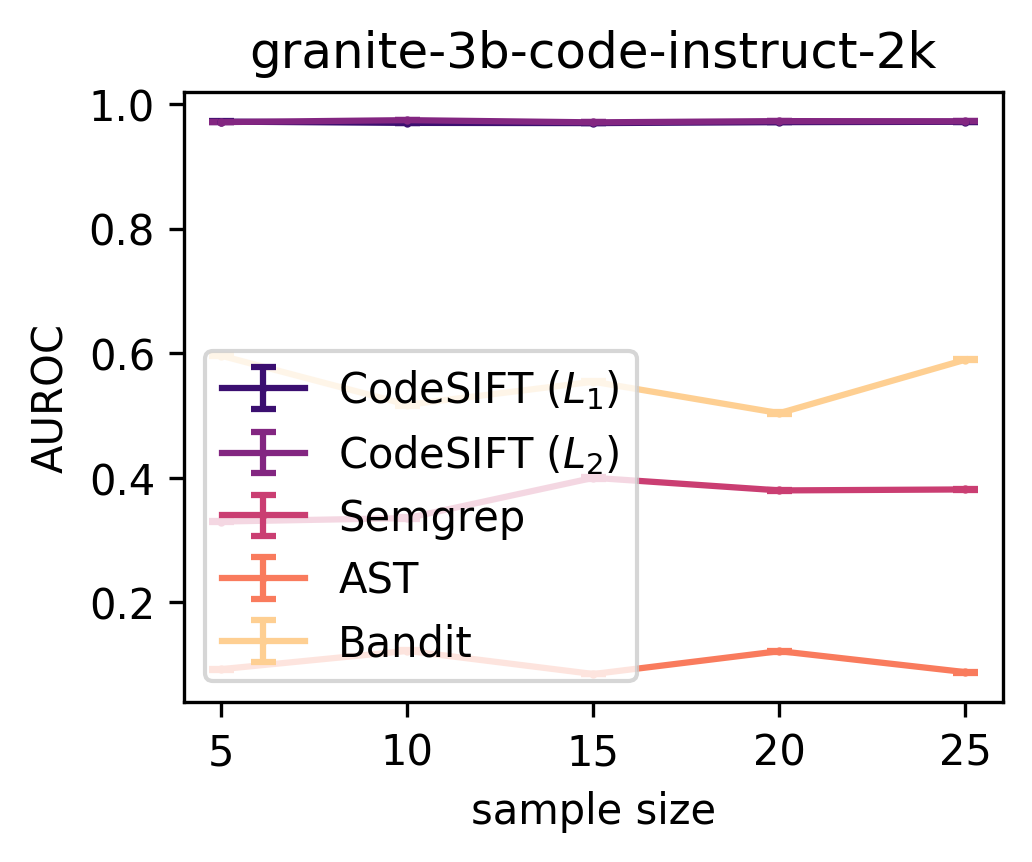} & \includegraphics[width=0.30\linewidth]{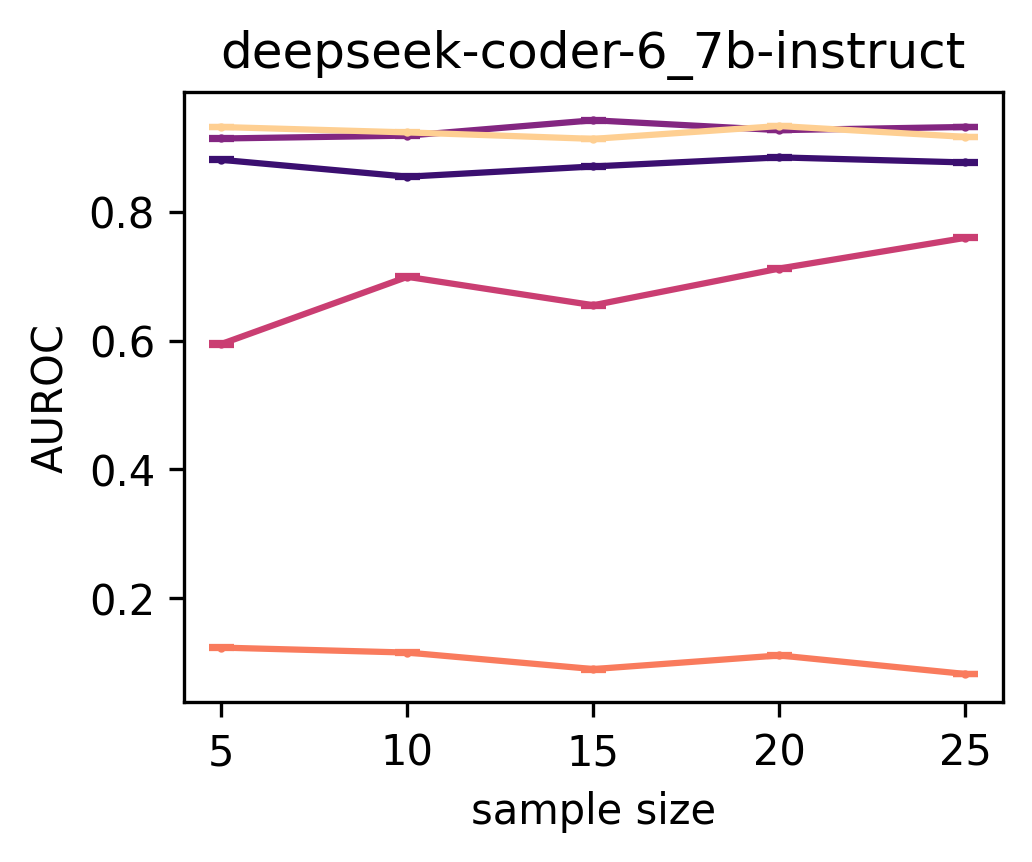} & \includegraphics[width=0.30\linewidth]{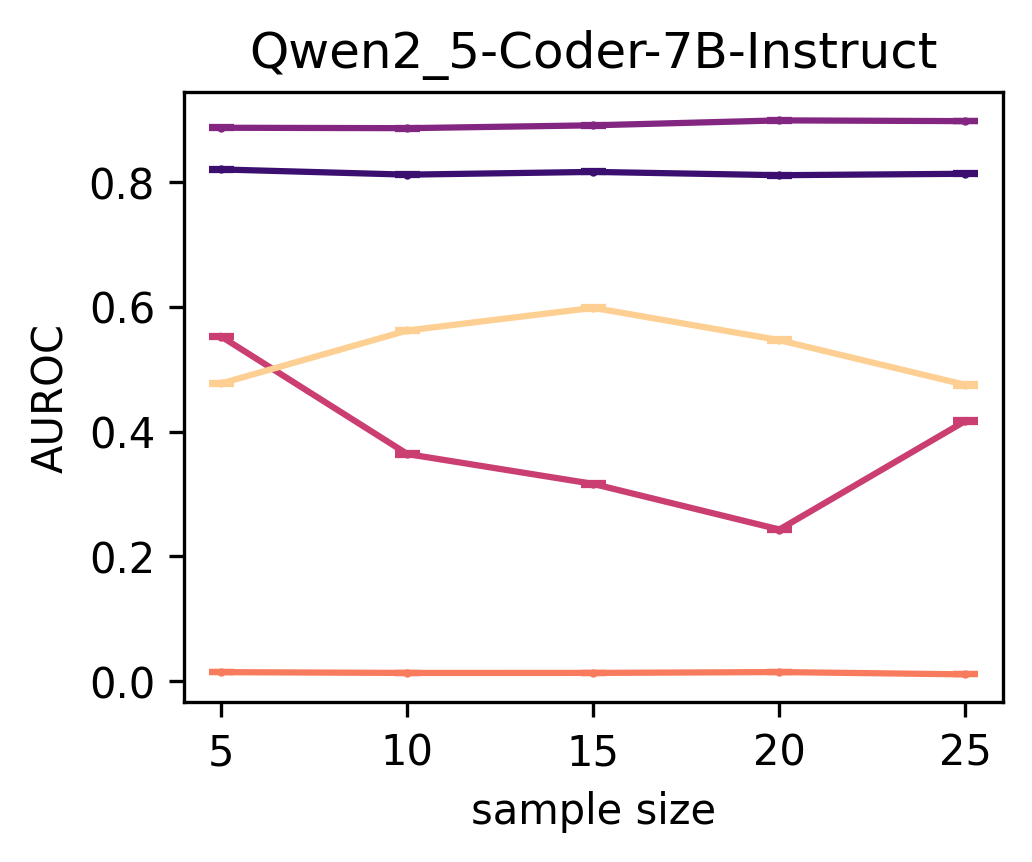} \\
\small Granite-3B-Code-Instruct-2K & \small DeepSeek-Coder-6.7B-Instruct & \small Qwen2.5-Coder-7B-Instruct \\
\end{tabular}
\caption{AUROC on AuthSec as a function of number of sampled completions, for CodeSIFT ($L_1$, $L_2$) and the three static baselines jointly. Columns correspond to Granite-3B-Code-Instruct-2K, DeepSeek-Coder-6.7B-Instruct, and Qwen2.5-Coder-7B-Instruct.}
\label{fig:auth-standalone2}
\end{figure}

\begin{figure}[ht]
\centering
\begin{tabular}{ccc}
\includegraphics[width=0.30\linewidth]{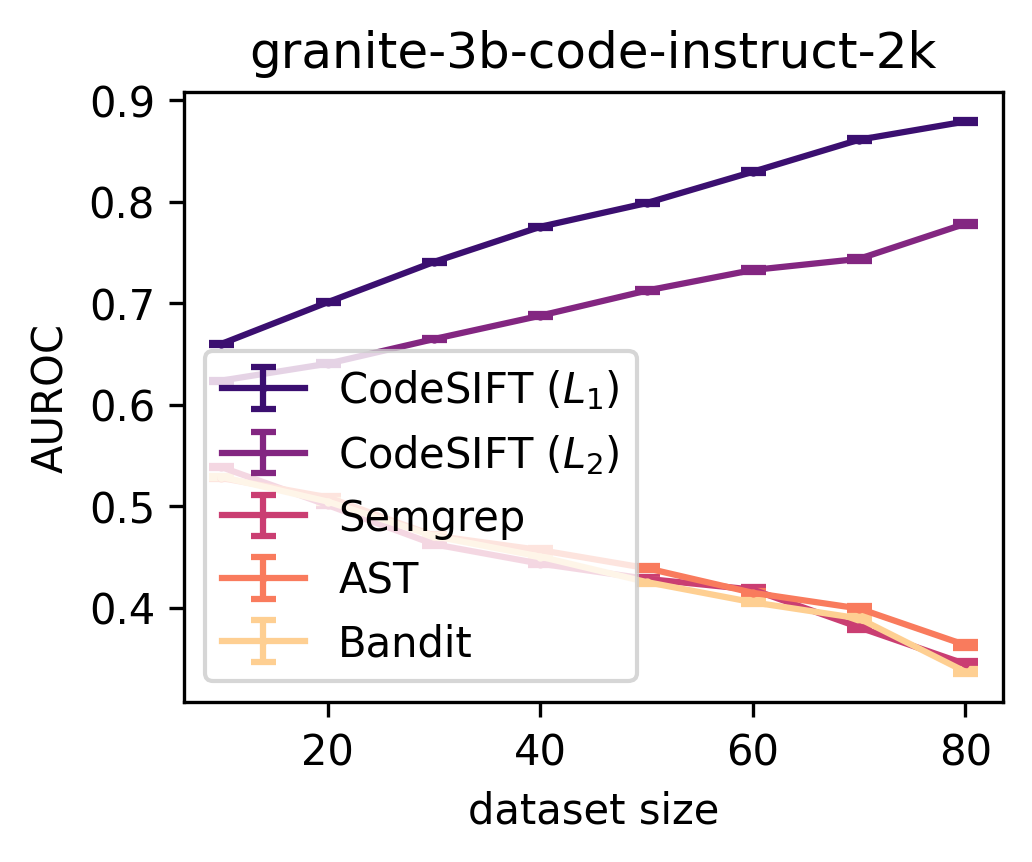} & \includegraphics[width=0.30\linewidth]{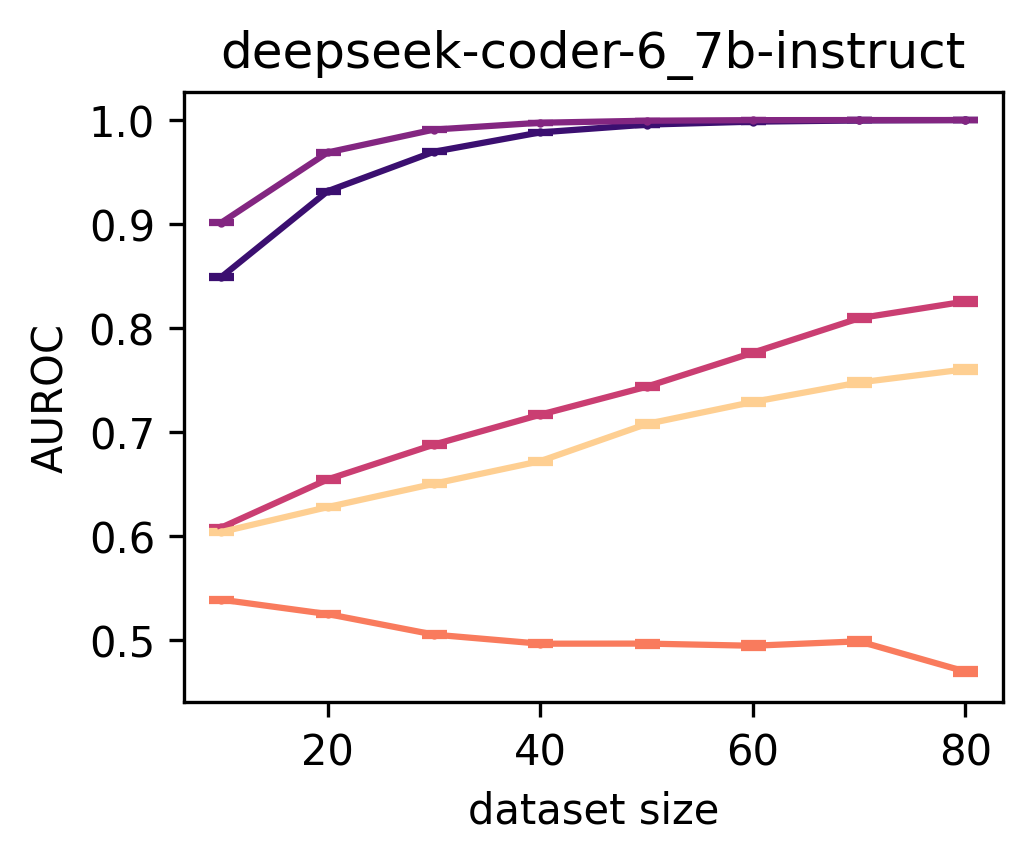} & \includegraphics[width=0.30\linewidth]{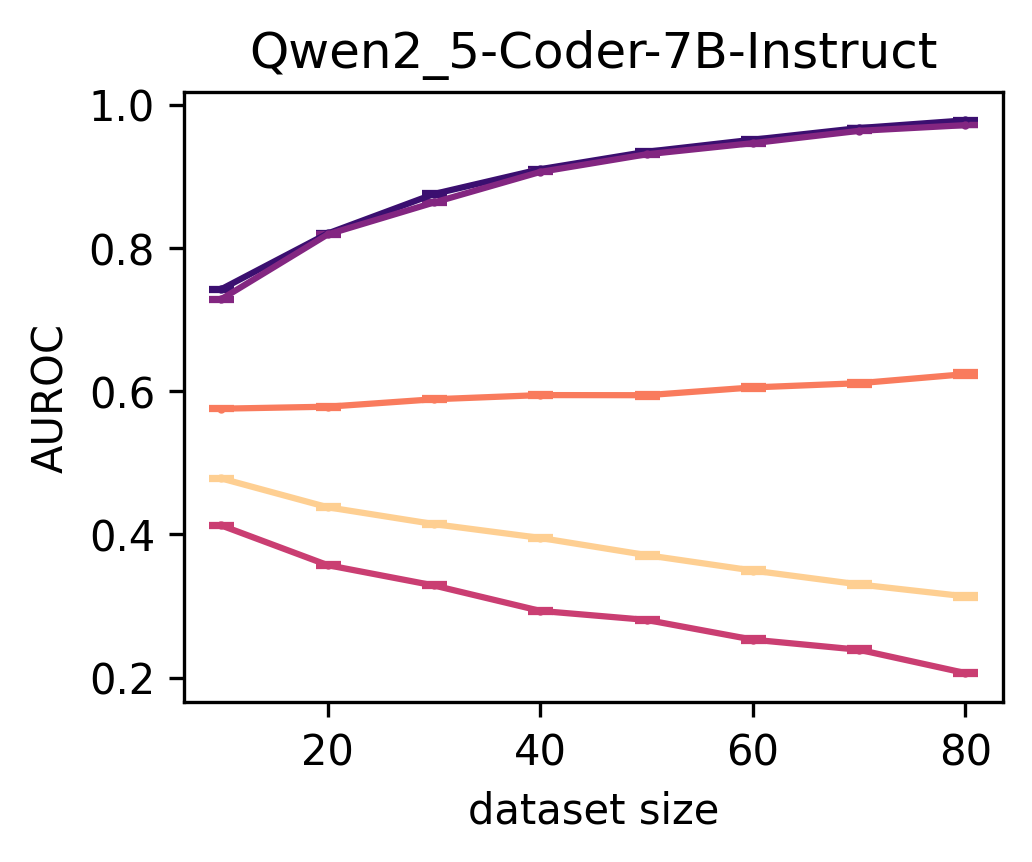} \\
\small Granite-3B-Code-Instruct-2K & \small DeepSeek-Coder-6.7B-Instruct & \small Qwen2.5-Coder-7B-Instruct \\
\end{tabular}
\caption{AUROC on InfraCloud as a function of dataset size, for CodeSIFT ($L_1$, $L_2$) and the three static baselines jointly. Columns correspond to Granite-3B-Code-Instruct-2K, DeepSeek-Coder-6.7B-Instruct, and Qwen2.5-Coder-7B-Instruct.}
\label{fig:infrastructure-standalone1}
\end{figure}

\begin{figure}[ht]
\centering
\begin{tabular}{ccc}
\includegraphics[width=0.30\linewidth]{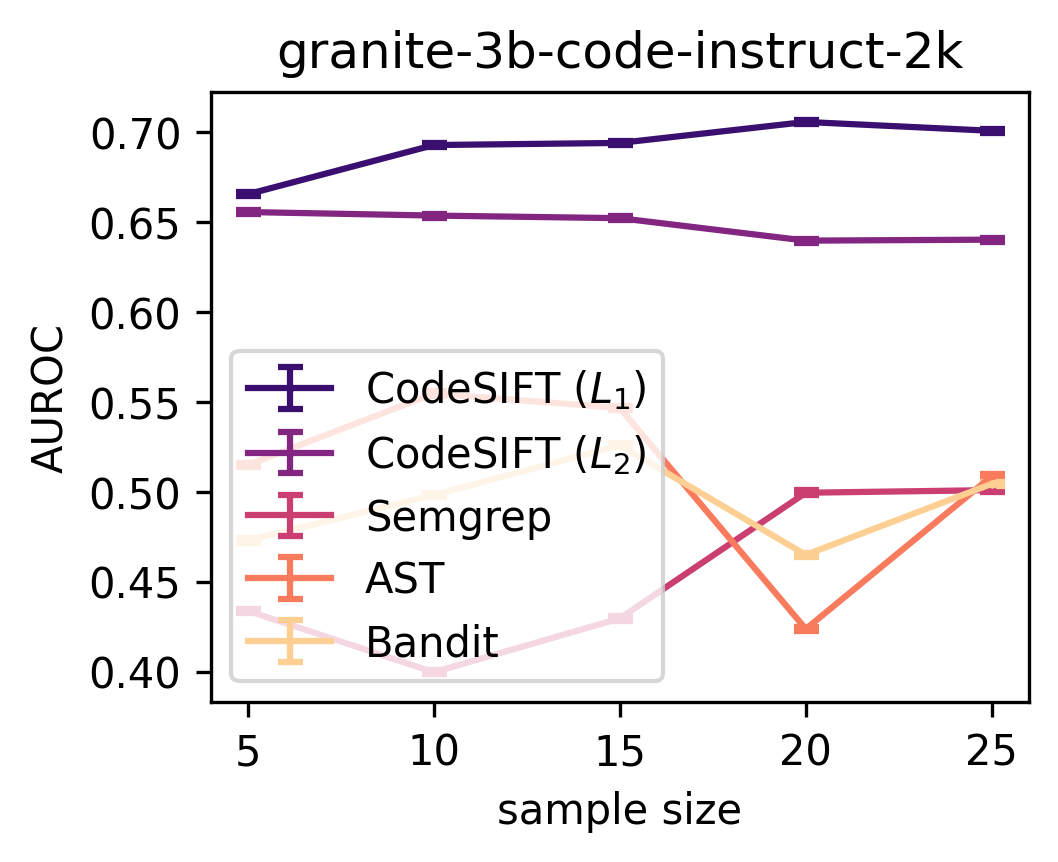} & \includegraphics[width=0.30\linewidth]{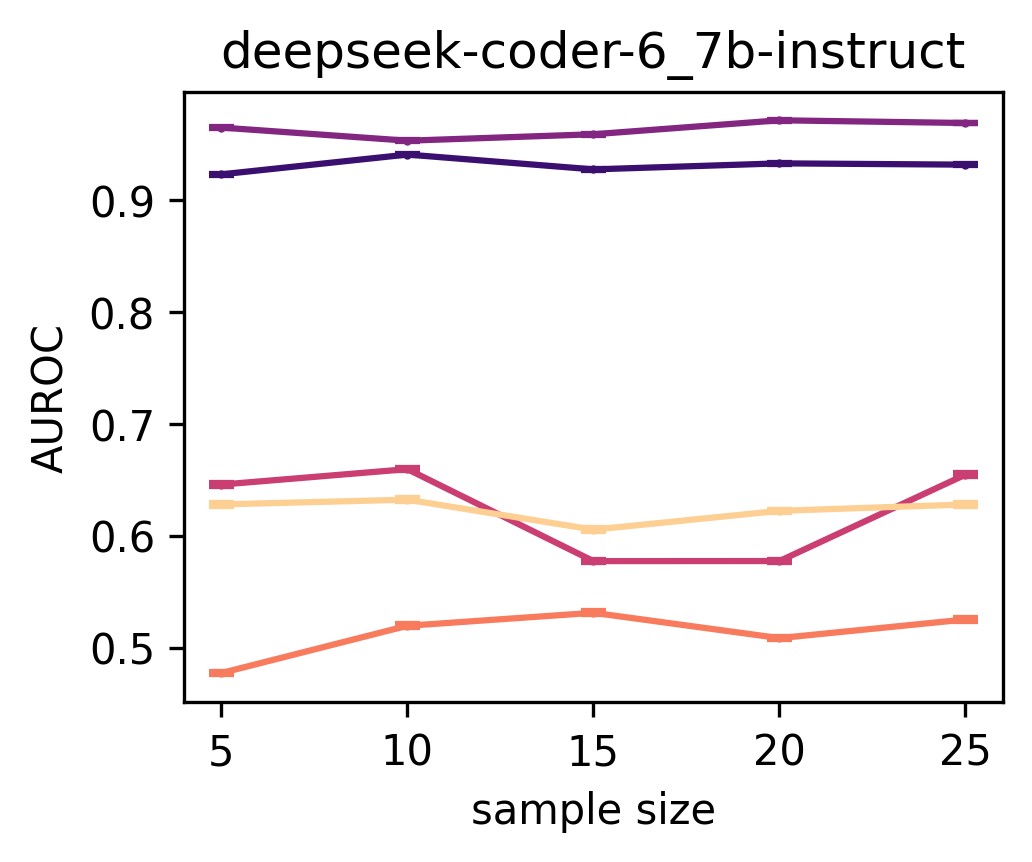} & \includegraphics[width=0.30\linewidth]{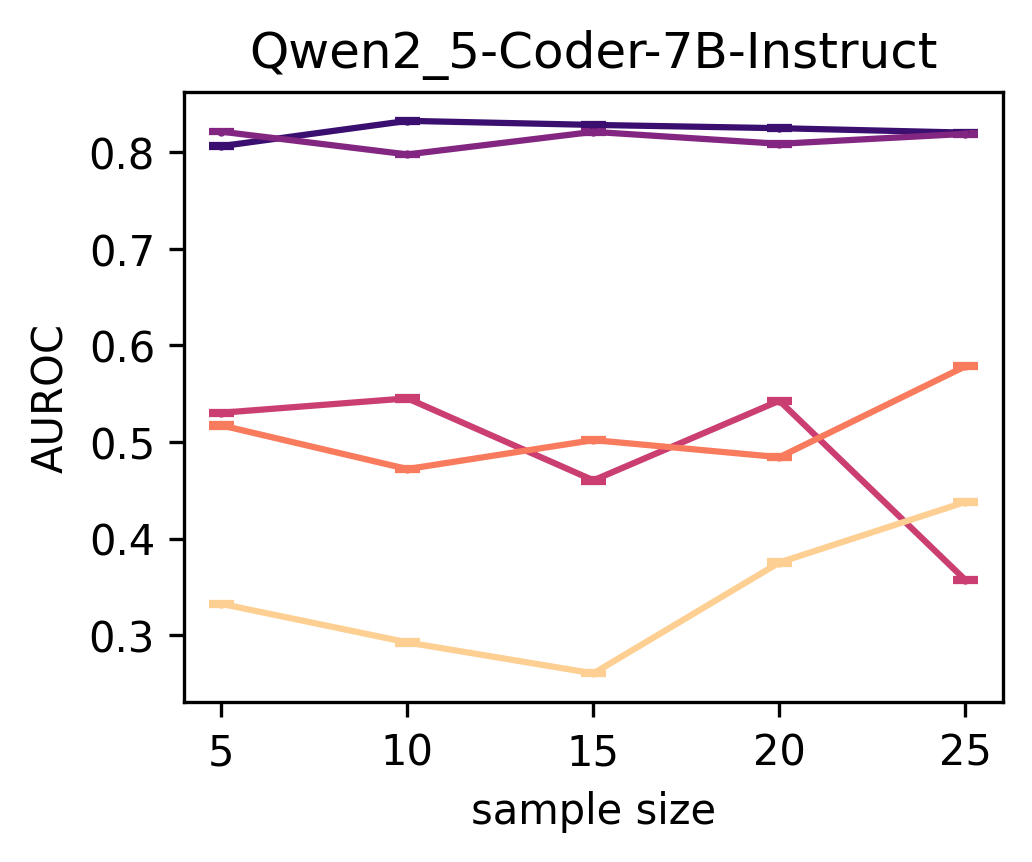} \\
\small Granite-3B-Code-Instruct-2K & \small DeepSeek-Coder-6.7B-Instruct & \small Qwen2.5-Coder-7B-Instruct \\
\end{tabular}
\caption{AUROC on InfraCloud as a function of number of sampled completions, for CodeSIFT ($L_1$, $L_2$) and the three static baselines jointly. Columns correspond to Granite-3B-Code-Instruct-2K, DeepSeek-Coder-6.7B-Instruct, and Qwen2.5-Coder-7B-Instruct.}
\label{fig:infrastructure-standalone2}
\end{figure}

\FloatBarrier

\subsection{Dataset size versus injection rate}

Figures~\ref{fig:auth-granite-3b-code-instruct-2k-dataset_size_injection_rate}--\ref{fig:auth-Qwen2_5-Coder-7B-Instruct-dataset_size_injection_rate} report AUROC jointly as a function of dataset size and injection rate on AuthSec, and Figures~\ref{fig:infrastructure-granite-3b-code-instruct-2k-dataset_size_injection_rate}--\ref{fig:infrastructure-Qwen2_5-Coder-7B-Instruct-dataset_size_injection_rate} report the same on InfraCloud, one figure per model. CodeSIFT's accuracy increases along both axes, larger candidate sets and higher contamination rates each make detection easier, and the $L_1$ and $L_2$ variants behave near-identically throughout. Bandit stays largely flat around 0.5 to 0.6 regardless of either quantity, indicating no real dependence on contamination level or candidate set size, while Semgrep and the AST Taint Tracker decline as either axis grows, in several settings falling to an AUROC near zero at the largest dataset sizes and injection rates, that is, becoming more confidently wrong as more evidence accumulates rather than merely uninformative.

\begin{figure}[ht]
\centering
\includegraphics[width=\linewidth]{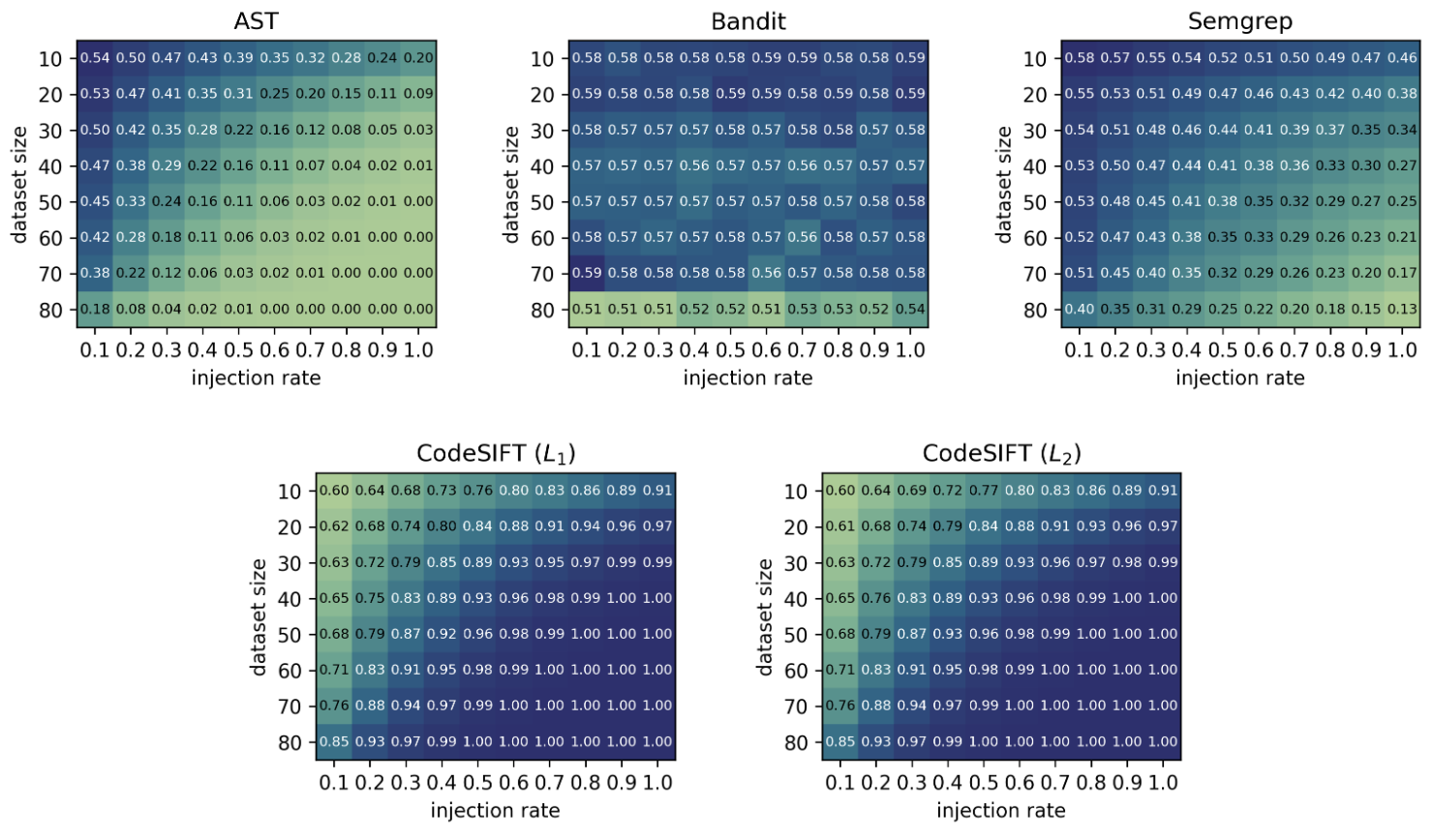}
\caption{AUROC on AuthSec for Granite-3B-Code-Instruct-2K, as a joint function of dataset size and injection rate, for the three static baselines (top row) and CodeSIFT ($L_1$, $L_2$, bottom row). CodeSIFT's AUROC increases monotonically along both axes, while the baselines remain largely flat or decline as either quantity grows.}
\label{fig:auth-granite-3b-code-instruct-2k-dataset_size_injection_rate}
\end{figure}

\begin{figure}[ht]
\centering
\includegraphics[width=\linewidth]{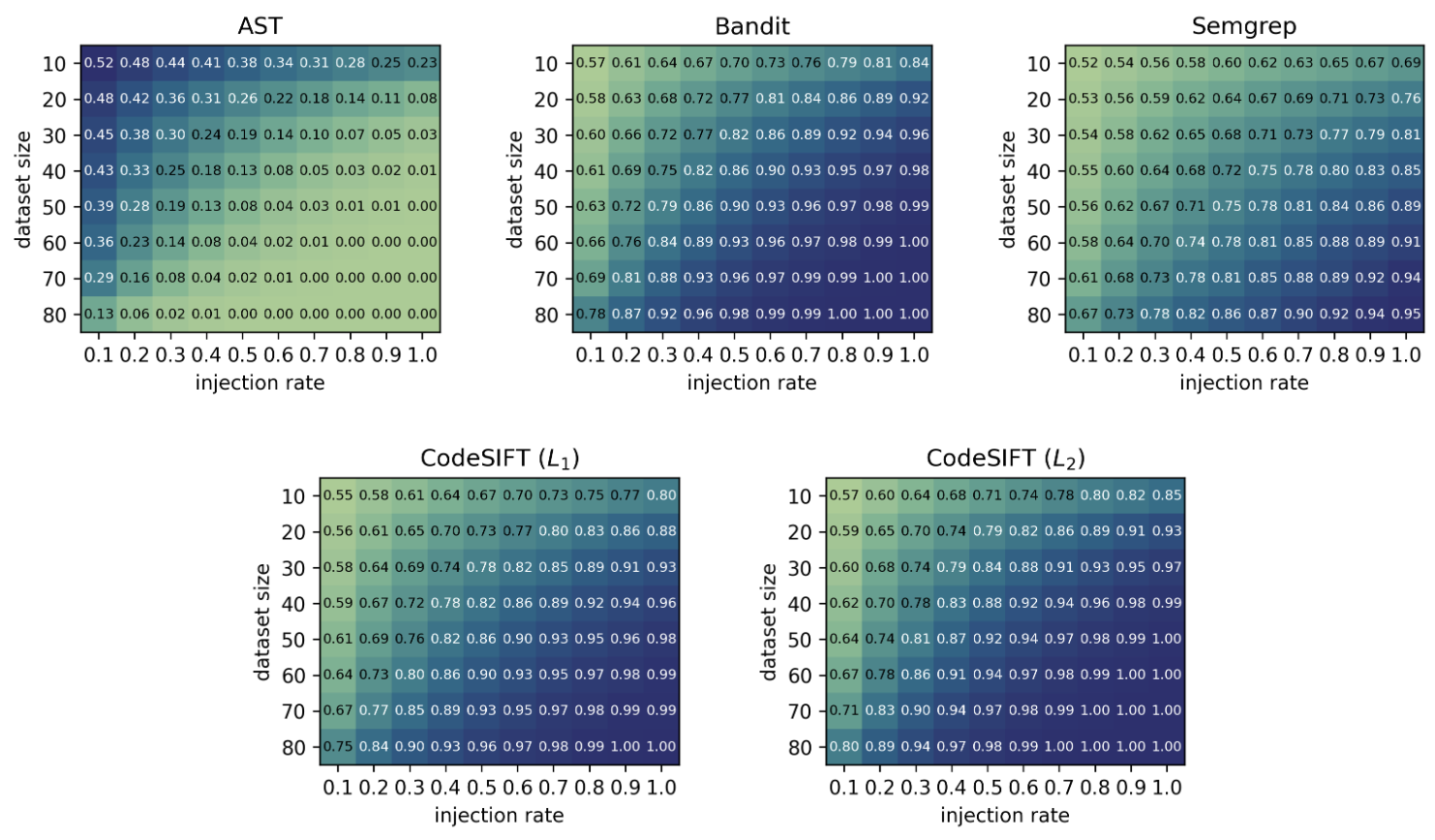}
\caption{AUROC on AuthSec for DeepSeek-Coder-6.7B-Instruct, as a joint function of dataset size and injection rate, for the three static baselines (top row) and CodeSIFT ($L_1$, $L_2$, bottom row). CodeSIFT's AUROC increases monotonically along both axes, while the baselines remain largely flat or decline as either quantity grows.}
\label{fig:auth-deepseek-coder-6_7b-instruct-dataset_size_injection_rate}
\end{figure}

\begin{figure}[ht]
\centering
\includegraphics[width=\linewidth]{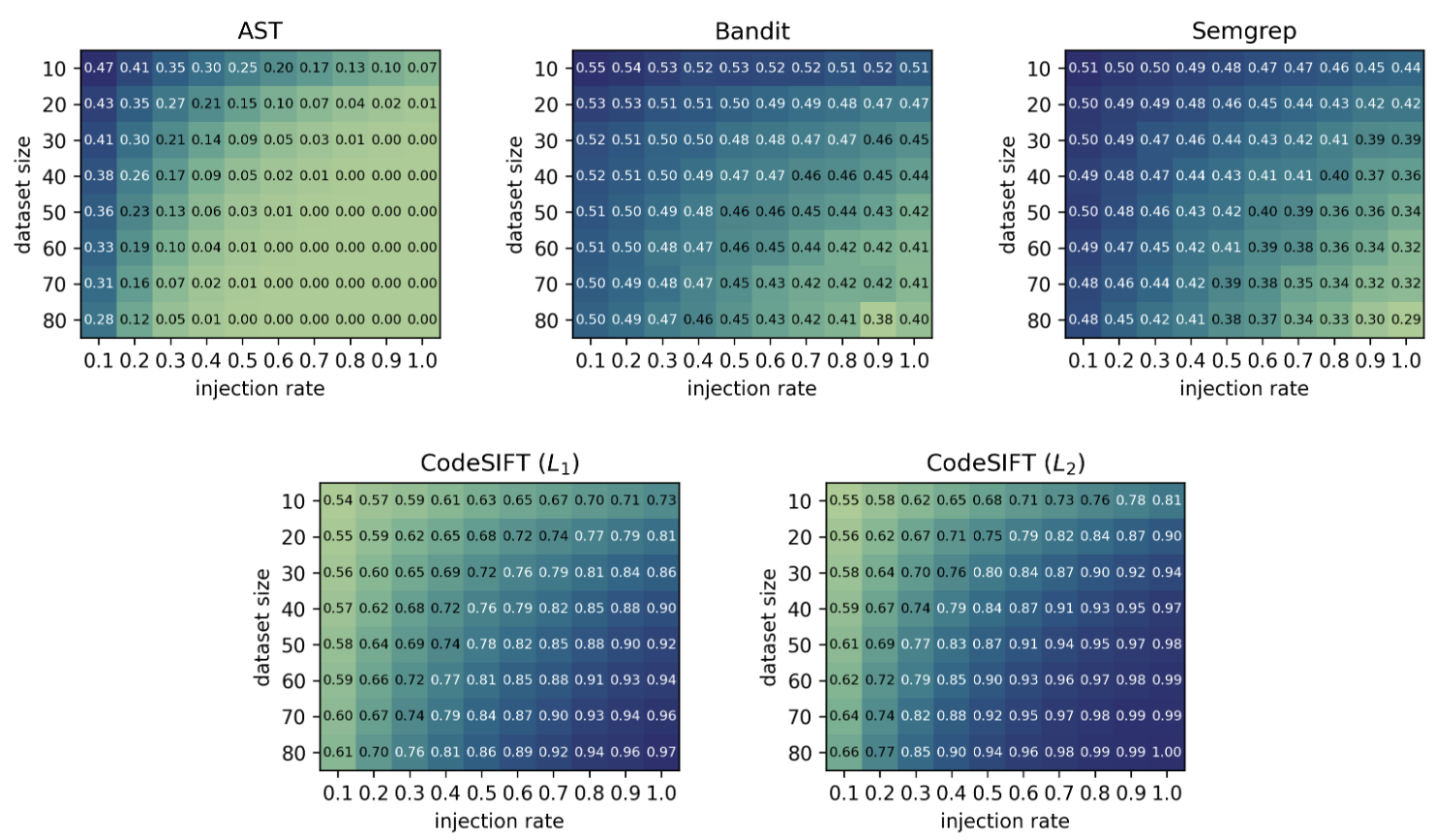}
\caption{AUROC on AuthSec for Qwen2.5-Coder-7B-Instruct, as a joint function of dataset size and injection rate, for the three static baselines (top row) and CodeSIFT ($L_1$, $L_2$, bottom row). CodeSIFT's AUROC increases monotonically along both axes, while the baselines remain largely flat or decline as either quantity grows.}
\label{fig:auth-Qwen2_5-Coder-7B-Instruct-dataset_size_injection_rate}
\end{figure}

\begin{figure}[ht]
\centering
\includegraphics[width=\linewidth]{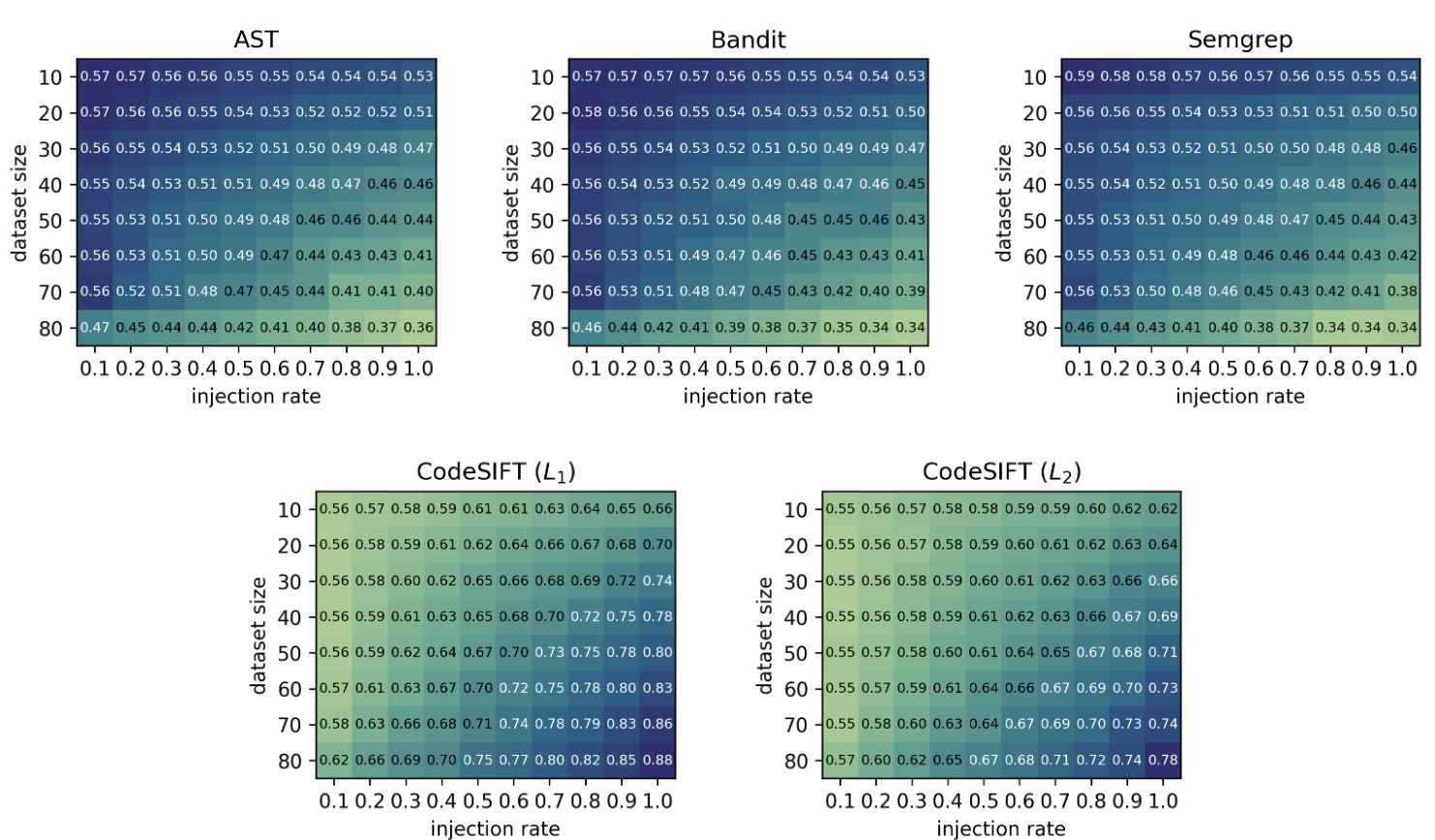}
\caption{AUROC on InfraCloud for Granite-3B-Code-Instruct-2K, as a joint function of dataset size and injection rate, for the three static baselines (top row) and CodeSIFT ($L_1$, $L_2$, bottom row). CodeSIFT's AUROC increases monotonically along both axes, while the baselines remain largely flat or decline as either quantity grows.}
\label{fig:infrastructure-granite-3b-code-instruct-2k-dataset_size_injection_rate}
\end{figure}

\begin{figure}[ht]
\centering
\includegraphics[width=\linewidth]{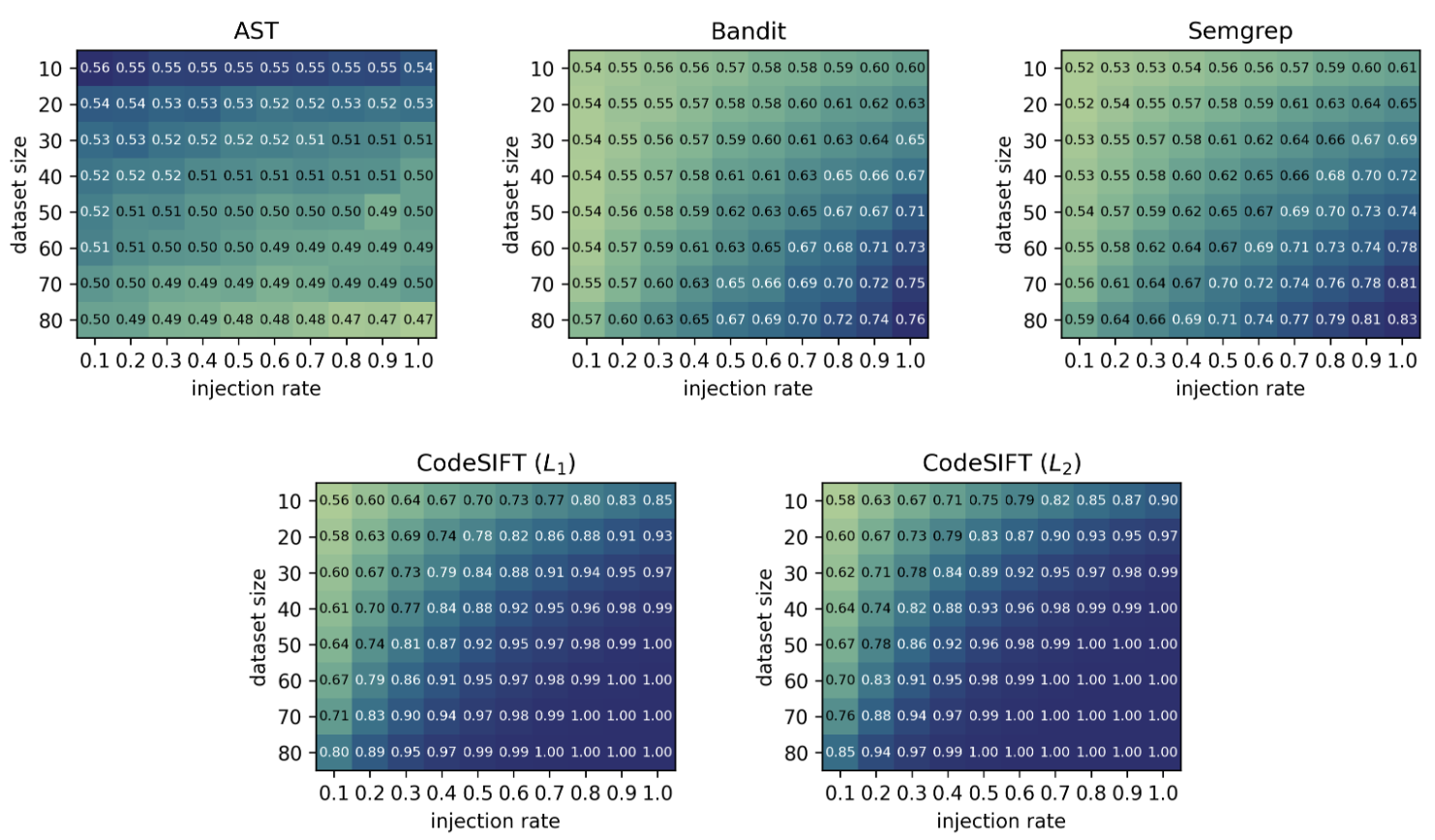}
\caption{AUROC on InfraCloud for DeepSeek-Coder-6.7B-Instruct, as a joint function of dataset size and injection rate, for the three static baselines (top row) and CodeSIFT ($L_1$, $L_2$, bottom row). CodeSIFT's AUROC increases monotonically along both axes, while the baselines remain largely flat or decline as either quantity grows.}
\label{fig:infrastructure-deepseek-coder-6_7b-instruct-dataset_size_injection_rate}
\end{figure}

\begin{figure}[ht]
\centering
\includegraphics[width=\linewidth]{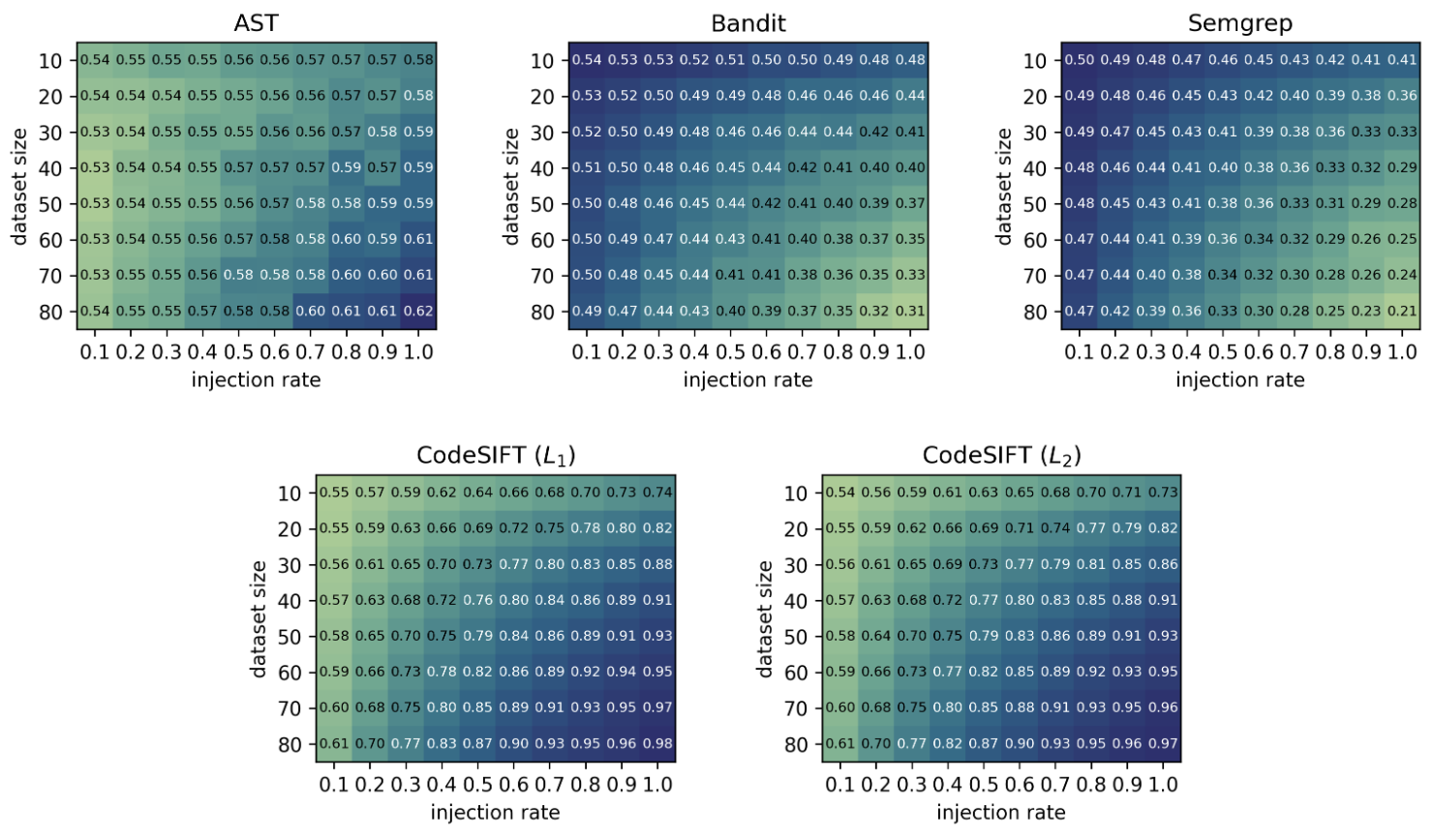}
\caption{AUROC on InfraCloud for Qwen2.5-Coder-7B-Instruct, as a joint function of dataset size and injection rate, for the three static baselines (top row) and CodeSIFT ($L_1$, $L_2$, bottom row). CodeSIFT's AUROC increases monotonically along both axes, while the baselines remain largely flat or decline as either quantity grows.}
\label{fig:infrastructure-Qwen2_5-Coder-7B-Instruct-dataset_size_injection_rate}
\end{figure}

\FloatBarrier

\subsection{Dataset size versus number of sampled completions}

Figures~\ref{fig:auth-granite-3b-code-instruct-2k-dataset_size_n_samples}--\ref{fig:auth-Qwen2_5-Coder-7B-Instruct-dataset_size_n_samples} report AUROC jointly as a function of dataset size and the number of sampled completions per prompt on AuthSec, and Figures~\ref{fig:infrastructure-granite-3b-code-instruct-2k-dataset_size_n_samples}--\ref{fig:infrastructure-Qwen2_5-Coder-7B-Instruct-dataset_size_n_samples} report the same on InfraCloud, one figure per model, at a fixed injection rate. CodeSIFT's accuracy tracks dataset size almost exclusively, increasing steadily as the candidate set grows, and is essentially flat across the number of sampled completions, so the detector does not need many completions per prompt to work. The baselines show no clear structure along either axis, their AUROC fluctuates within a narrow band around chance regardless of how much data or how many completions are available.

\begin{figure}[ht]
\centering
\includegraphics[width=\linewidth]{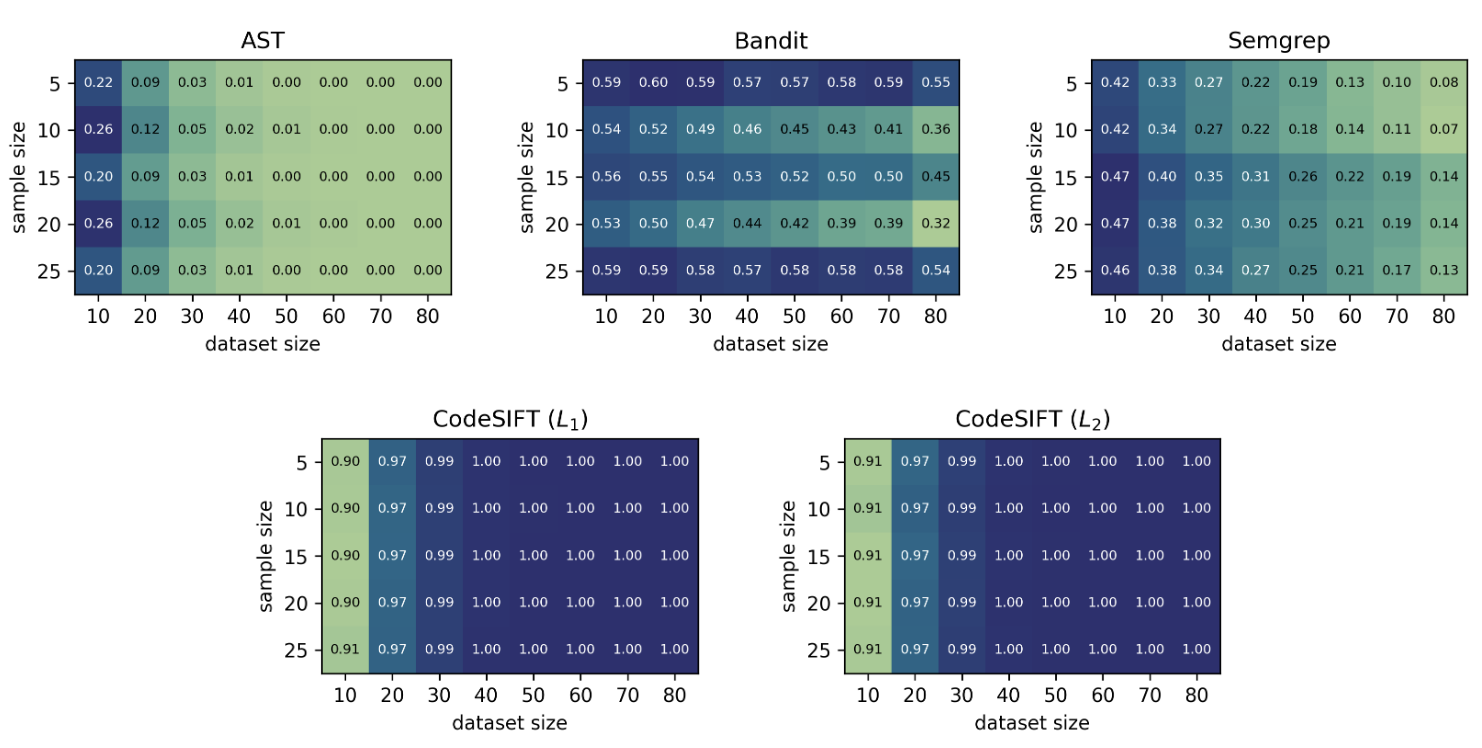}
\caption{AUROC on AuthSec for Granite-3B-Code-Instruct-2K, as a joint function of dataset size and number of sampled completions per prompt, for the three static baselines (top row) and CodeSIFT ($L_1$, $L_2$, bottom row), at a fixed injection rate. CodeSIFT's AUROC depends almost entirely on dataset size and is largely insensitive to the number of sampled completions, indicating the test is sample efficient. The baselines show no clear dependence on either axis.}
\label{fig:auth-granite-3b-code-instruct-2k-dataset_size_n_samples}
\end{figure}

\begin{figure}[ht]
\centering
\includegraphics[width=\linewidth]{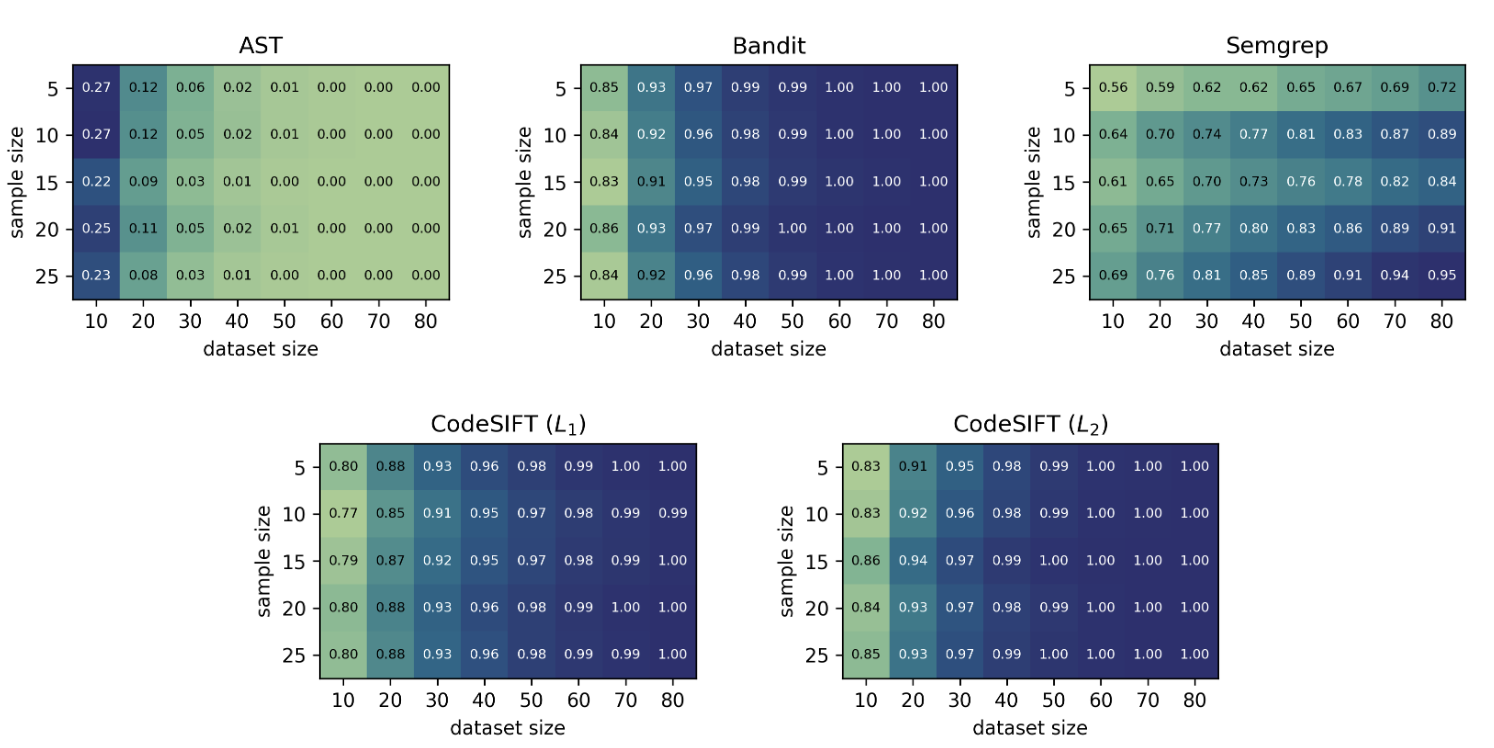}
\caption{AUROC on AuthSec for DeepSeek-Coder-6.7B-Instruct, as a joint function of dataset size and number of sampled completions per prompt, for the three static baselines (top row) and CodeSIFT ($L_1$, $L_2$, bottom row), at a fixed injection rate. CodeSIFT's AUROC depends almost entirely on dataset size and is largely insensitive to the number of sampled completions, indicating the test is sample efficient. The baselines show no clear dependence on either axis.}
\label{fig:auth-deepseek-coder-6_7b-instruct-dataset_size_n_samples}
\end{figure}

\begin{figure}[ht]
\centering
\includegraphics[width=\linewidth]{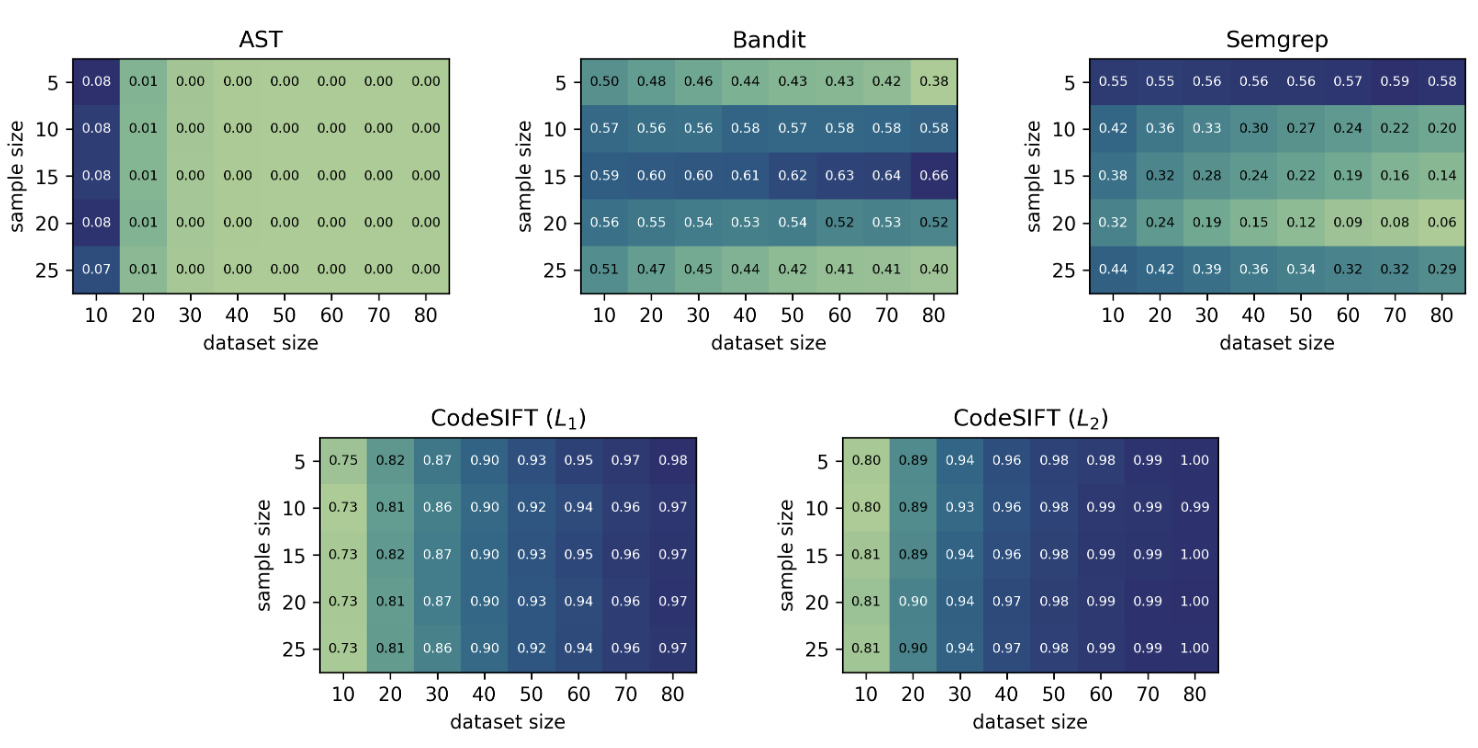}
\caption{AUROC on AuthSec for Qwen2.5-Coder-7B-Instruct, as a joint function of dataset size and number of sampled completions per prompt, for the three static baselines (top row) and CodeSIFT ($L_1$, $L_2$, bottom row), at a fixed injection rate. CodeSIFT's AUROC depends almost entirely on dataset size and is largely insensitive to the number of sampled completions, indicating the test is sample efficient. The baselines show no clear dependence on either axis.}
\label{fig:auth-Qwen2_5-Coder-7B-Instruct-dataset_size_n_samples}
\end{figure}

\begin{figure}[ht]
\centering
\includegraphics[width=\linewidth]{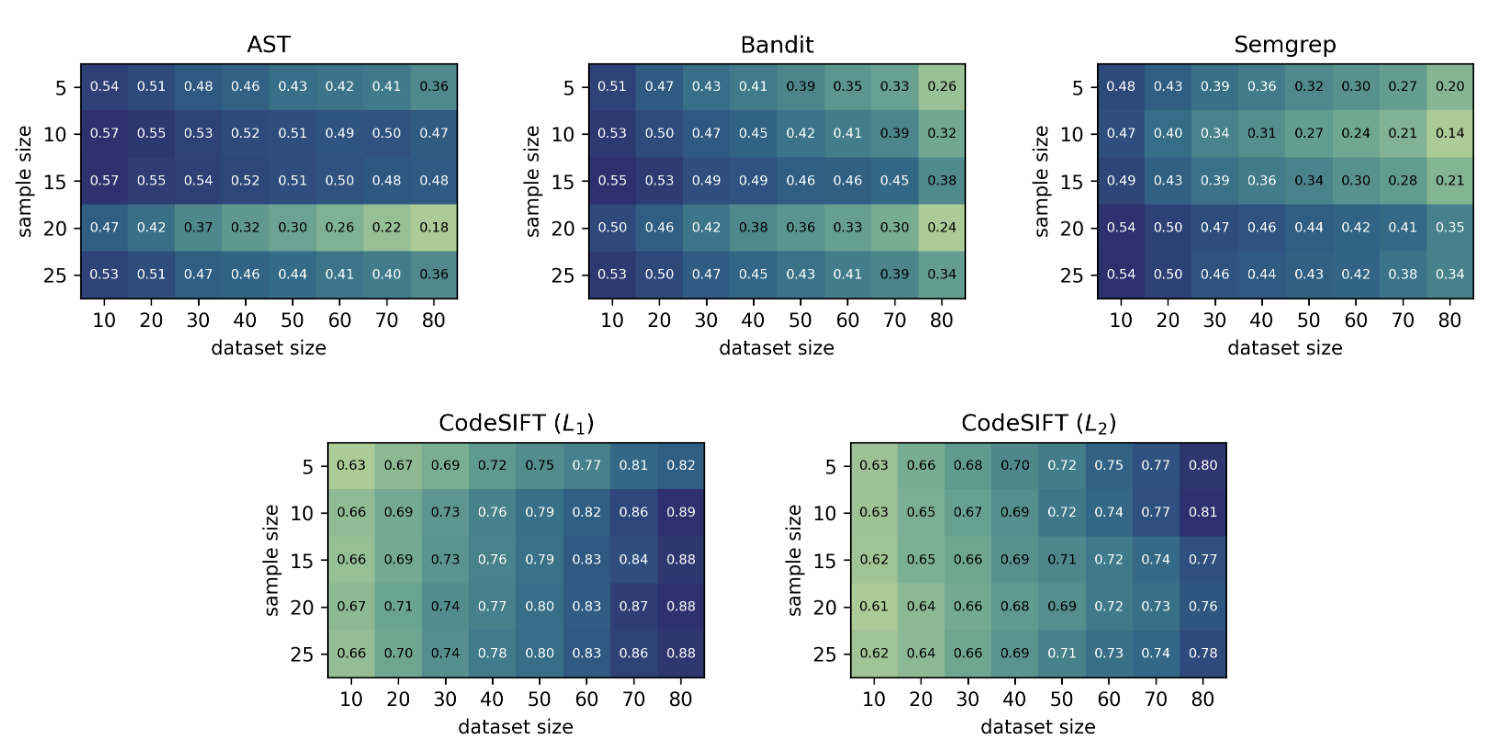}
\caption{AUROC on InfraCloud for Granite-3B-Code-Instruct-2K, as a joint function of dataset size and number of sampled completions per prompt, for the three static baselines (top row) and CodeSIFT ($L_1$, $L_2$, bottom row), at a fixed injection rate. CodeSIFT's AUROC depends almost entirely on dataset size and is largely insensitive to the number of sampled completions, indicating the test is sample efficient. The baselines show no clear dependence on either axis.}
\label{fig:infrastructure-granite-3b-code-instruct-2k-dataset_size_n_samples}
\end{figure}

\begin{figure}[ht]
\centering
\includegraphics[width=\linewidth]{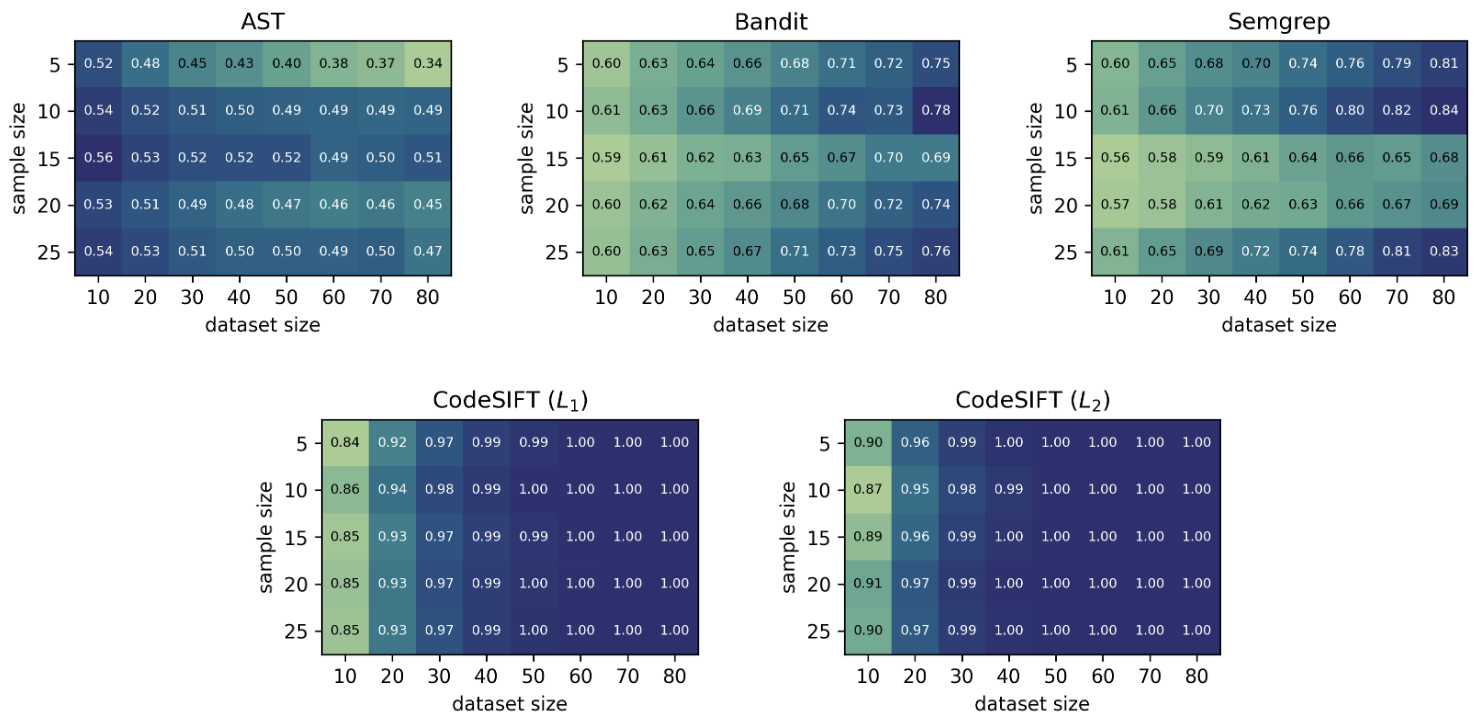}
\caption{AUROC on InfraCloud for DeepSeek-Coder-6.7B-Instruct, as a joint function of dataset size and number of sampled completions per prompt, for the three static baselines (top row) and CodeSIFT ($L_1$, $L_2$, bottom row), at a fixed injection rate. CodeSIFT's AUROC depends almost entirely on dataset size and is largely insensitive to the number of sampled completions, indicating the test is sample efficient. The baselines show no clear dependence on either axis.}
\label{fig:infrastructure-deepseek-coder-6_7b-instruct-dataset_size_n_samples}
\end{figure}

\begin{figure}[ht]
\centering
\includegraphics[width=\linewidth]{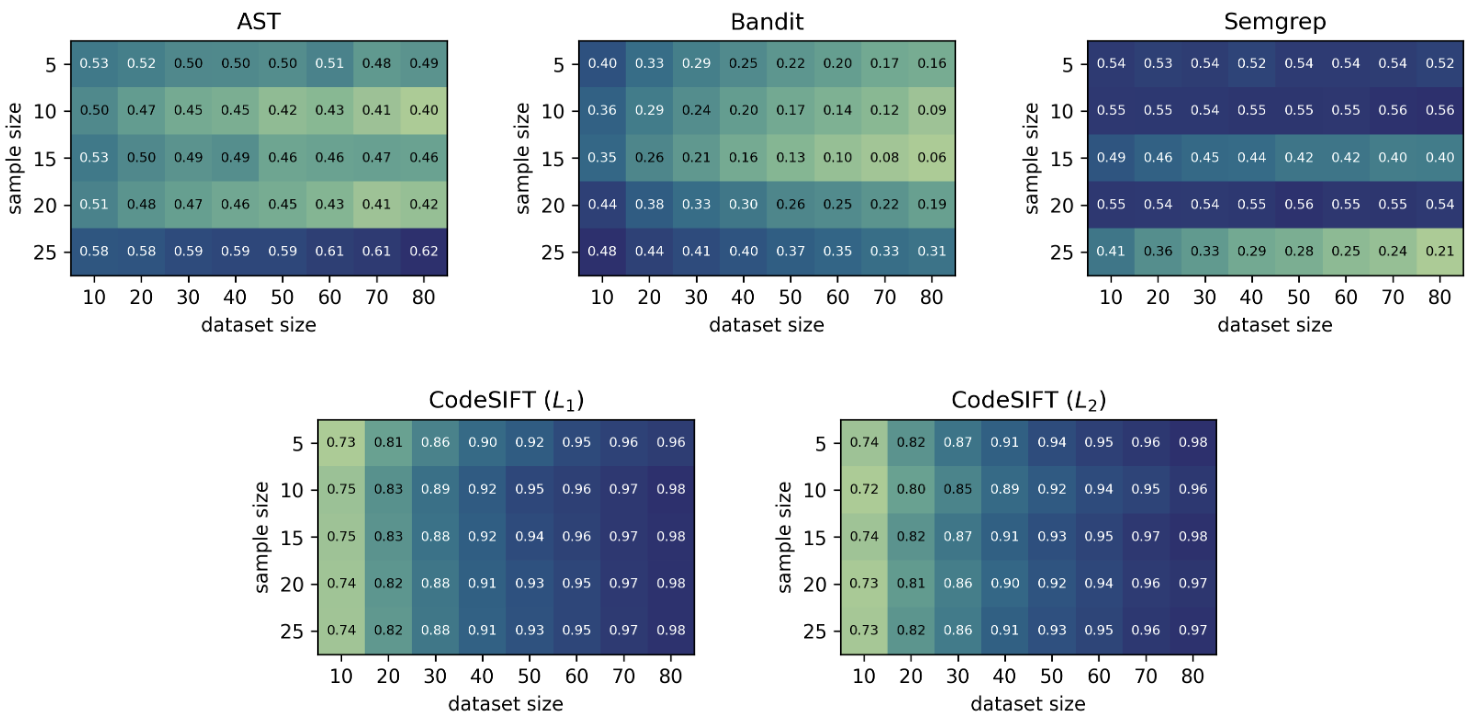}
\caption{AUROC on InfraCloud for Qwen2.5-Coder-7B-Instruct, as a joint function of dataset size and number of sampled completions per prompt, for the three static baselines (top row) and CodeSIFT ($L_1$, $L_2$, bottom row), at a fixed injection rate. CodeSIFT's AUROC depends almost entirely on dataset size and is largely insensitive to the number of sampled completions, indicating the test is sample efficient. The baselines show no clear dependence on either axis.}
\label{fig:infrastructure-Qwen2_5-Coder-7B-Instruct-dataset_size_n_samples}
\end{figure}

\FloatBarrier

\subsection{Injection rate versus number of sampled completions}

Figures~\ref{fig:auth-granite-3b-code-instruct-2k-injection_rate_n_samples}--\ref{fig:auth-Qwen2_5-Coder-7B-Instruct-injection_rate_n_samples} report AUROC jointly as a function of injection rate and the number of sampled completions per prompt on AuthSec, and Figures~\ref{fig:infrastructure-granite-3b-code-instruct-2k-injection_rate_n_samples}--\ref{fig:infrastructure-Qwen2_5-Coder-7B-Instruct-injection_rate_n_samples} report the same on InfraCloud, one figure per model, at a fixed dataset size. As in the previous sweep, CodeSIFT's accuracy tracks one axis almost exclusively, here the injection rate, increasing steadily as contamination grows, while remaining essentially flat across the number of sampled completions. The baselines again show no clear structure along either axis.

\begin{figure}[ht]
\centering
\includegraphics[width=\linewidth]{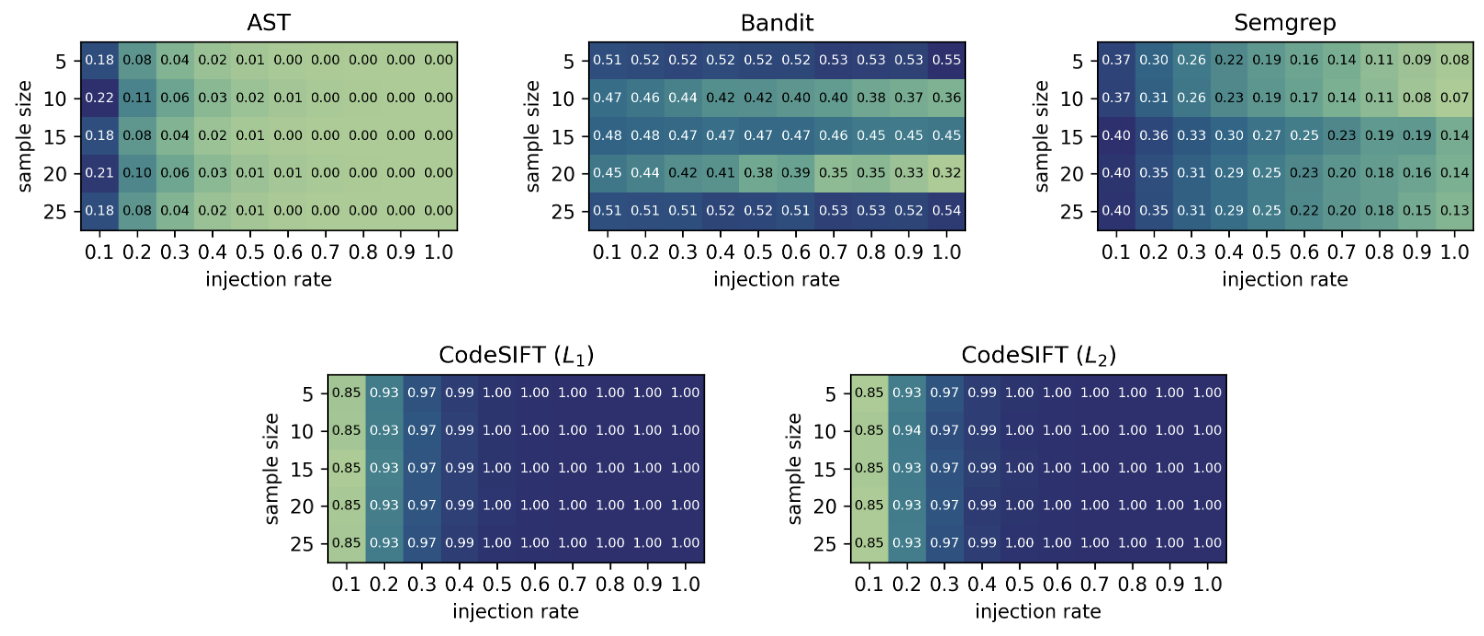}
\caption{AUROC on AuthSec for Granite-3B-Code-Instruct-2K, as a joint function of injection rate and number of sampled completions per prompt, for the three static baselines (top row) and CodeSIFT ($L_1$, $L_2$, bottom row), at a fixed dataset size. CodeSIFT's AUROC depends almost entirely on injection rate and is largely insensitive to the number of sampled completions. The baselines show no clear dependence on either axis.}
\label{fig:auth-granite-3b-code-instruct-2k-injection_rate_n_samples}
\end{figure}

\begin{figure}[ht]
\centering
\includegraphics[width=\linewidth]{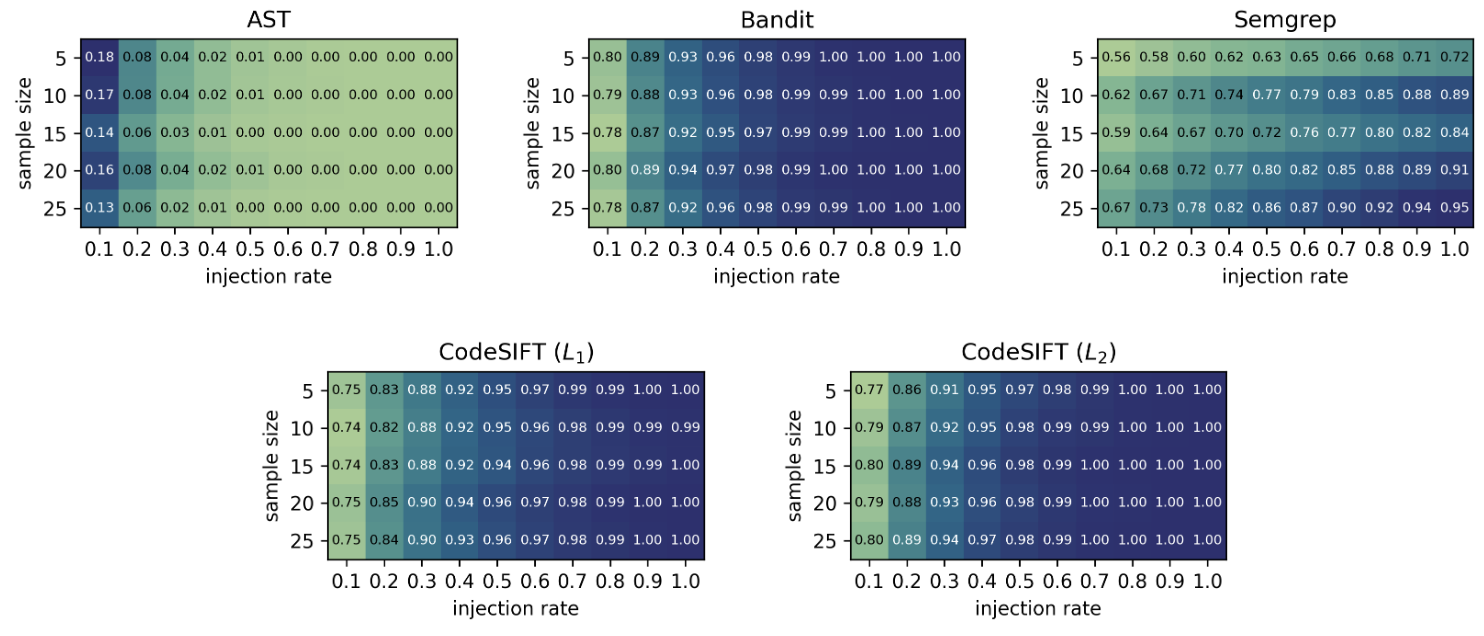}
\caption{AUROC on AuthSec for DeepSeek-Coder-6.7B-Instruct, as a joint function of injection rate and number of sampled completions per prompt, for the three static baselines (top row) and CodeSIFT ($L_1$, $L_2$, bottom row), at a fixed dataset size. CodeSIFT's AUROC depends almost entirely on injection rate and is largely insensitive to the number of sampled completions. The baselines show no clear dependence on either axis.}
\label{fig:auth-deepseek-coder-6_7b-instruct-injection_rate_n_samples}
\end{figure}

\begin{figure}[ht]
\centering
\includegraphics[width=\linewidth]{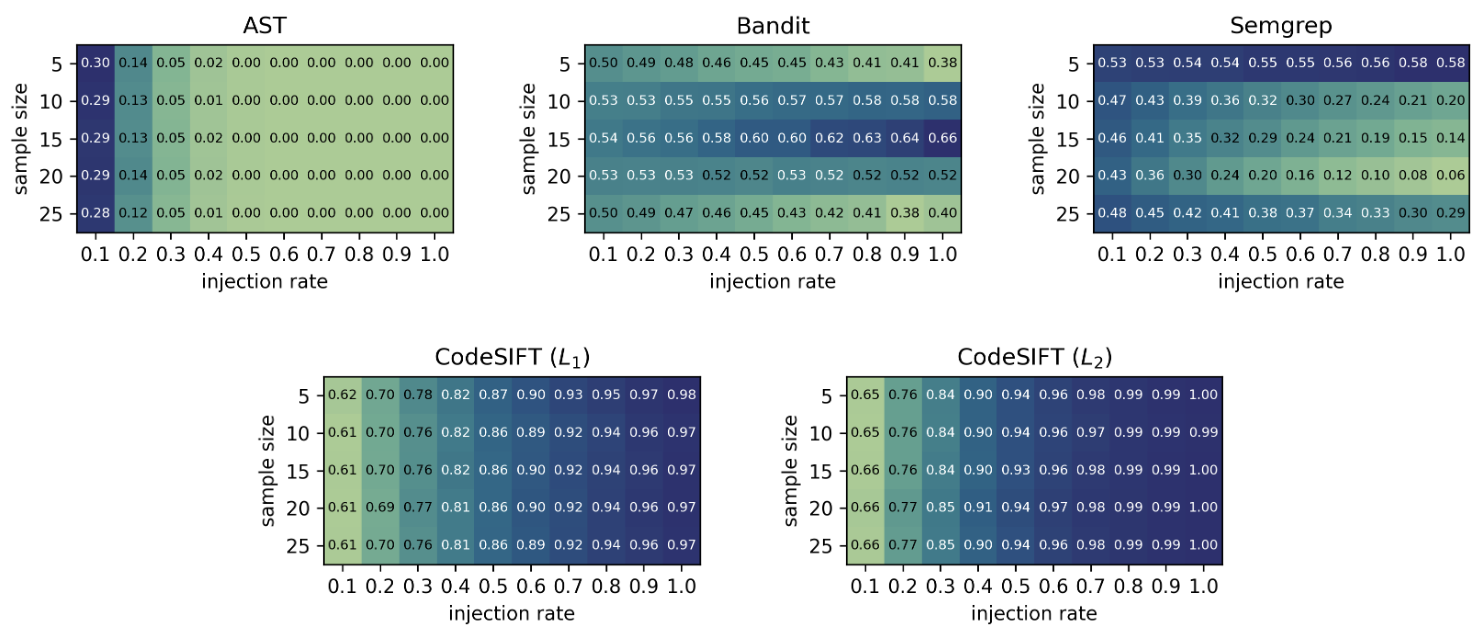}
\caption{AUROC on AuthSec for Qwen2.5-Coder-7B-Instruct, as a joint function of injection rate and number of sampled completions per prompt, for the three static baselines (top row) and CodeSIFT ($L_1$, $L_2$, bottom row), at a fixed dataset size. CodeSIFT's AUROC depends almost entirely on injection rate and is largely insensitive to the number of sampled completions. The baselines show no clear dependence on either axis.}
\label{fig:auth-Qwen2_5-Coder-7B-Instruct-injection_rate_n_samples}
\end{figure}

\begin{figure}[ht]
\centering
\includegraphics[width=\linewidth]{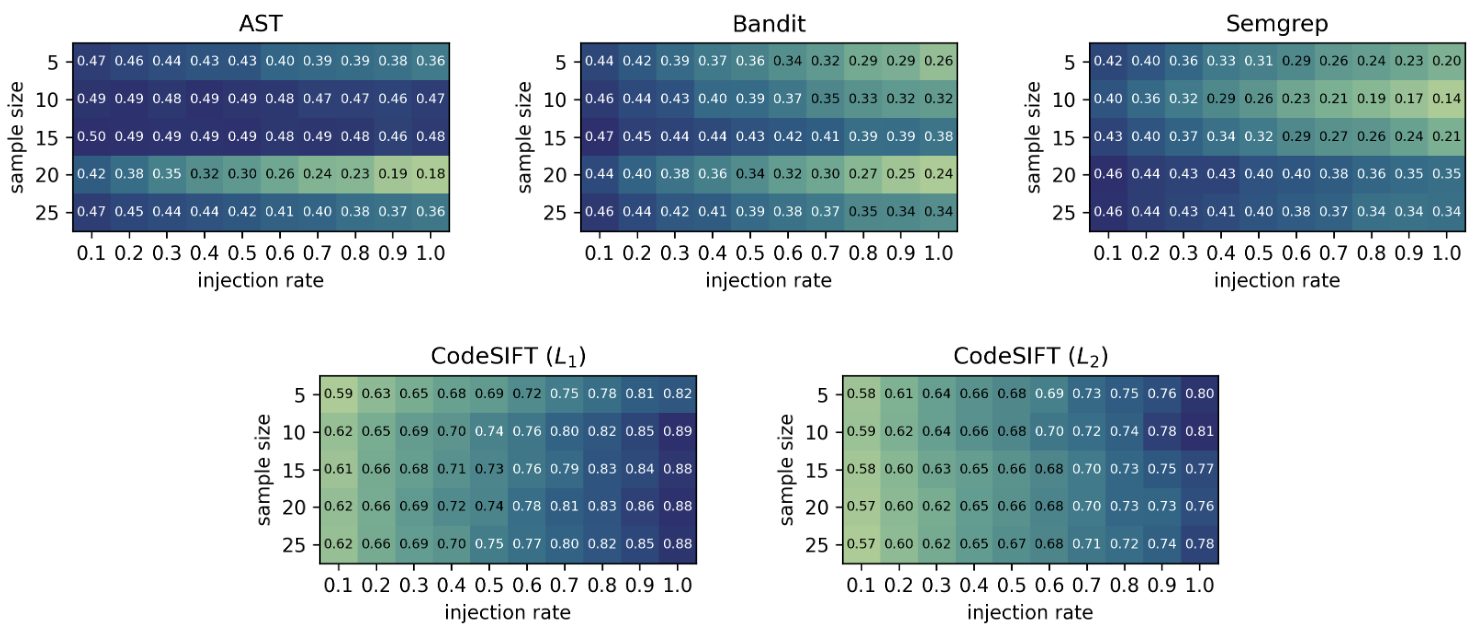}
\caption{AUROC on InfraCloud for Granite-3B-Code-Instruct-2K, as a joint function of injection rate and number of sampled completions per prompt, for the three static baselines (top row) and CodeSIFT ($L_1$, $L_2$, bottom row), at a fixed dataset size. CodeSIFT's AUROC depends almost entirely on injection rate and is largely insensitive to the number of sampled completions. The baselines show no clear dependence on either axis.}
\label{fig:infrastructure-granite-3b-code-instruct-2k-injection_rate_n_samples}
\end{figure}

\begin{figure}[ht]
\centering
\includegraphics[width=\linewidth]{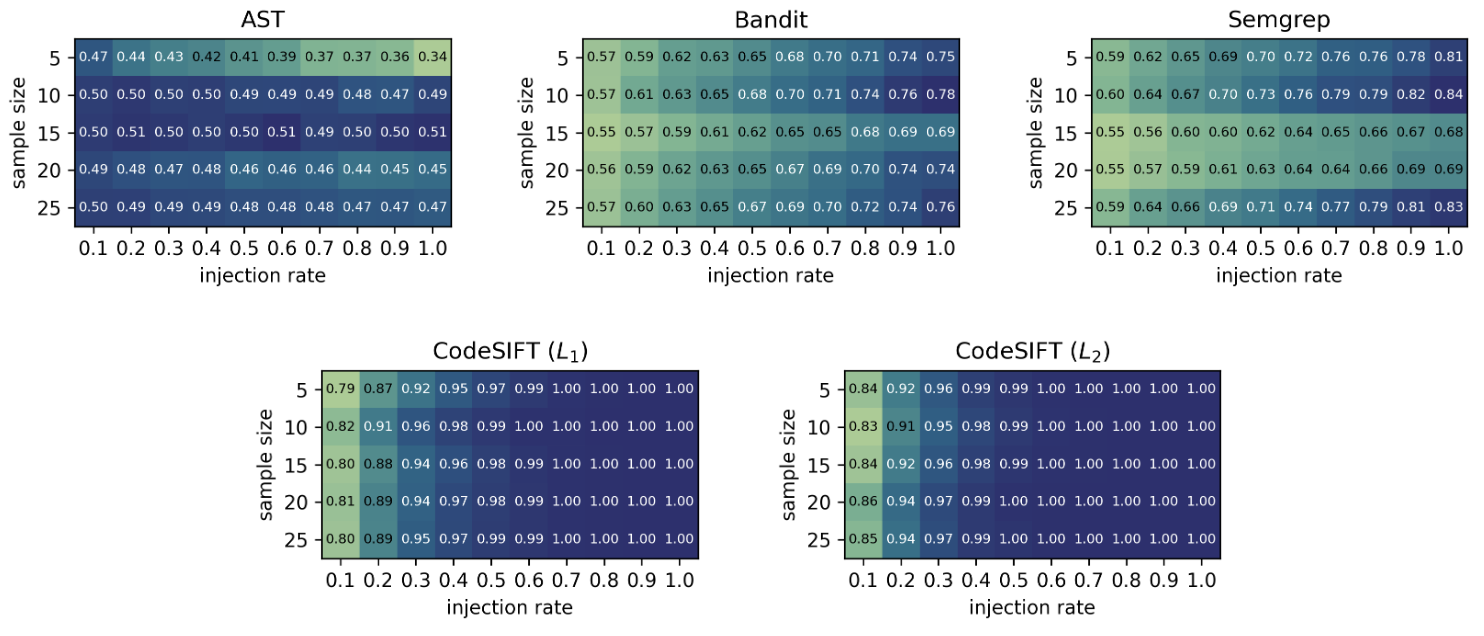}
\caption{AUROC on InfraCloud for DeepSeek-Coder-6.7B-Instruct, as a joint function of injection rate and number of sampled completions per prompt, for the three static baselines (top row) and CodeSIFT ($L_1$, $L_2$, bottom row), at a fixed dataset size. CodeSIFT's AUROC depends almost entirely on injection rate and is largely insensitive to the number of sampled completions. The baselines show no clear dependence on either axis.}
\label{fig:infrastructure-deepseek-coder-6_7b-instruct-injection_rate_n_samples}
\end{figure}

\begin{figure}[ht]
\centering
\includegraphics[width=\linewidth]{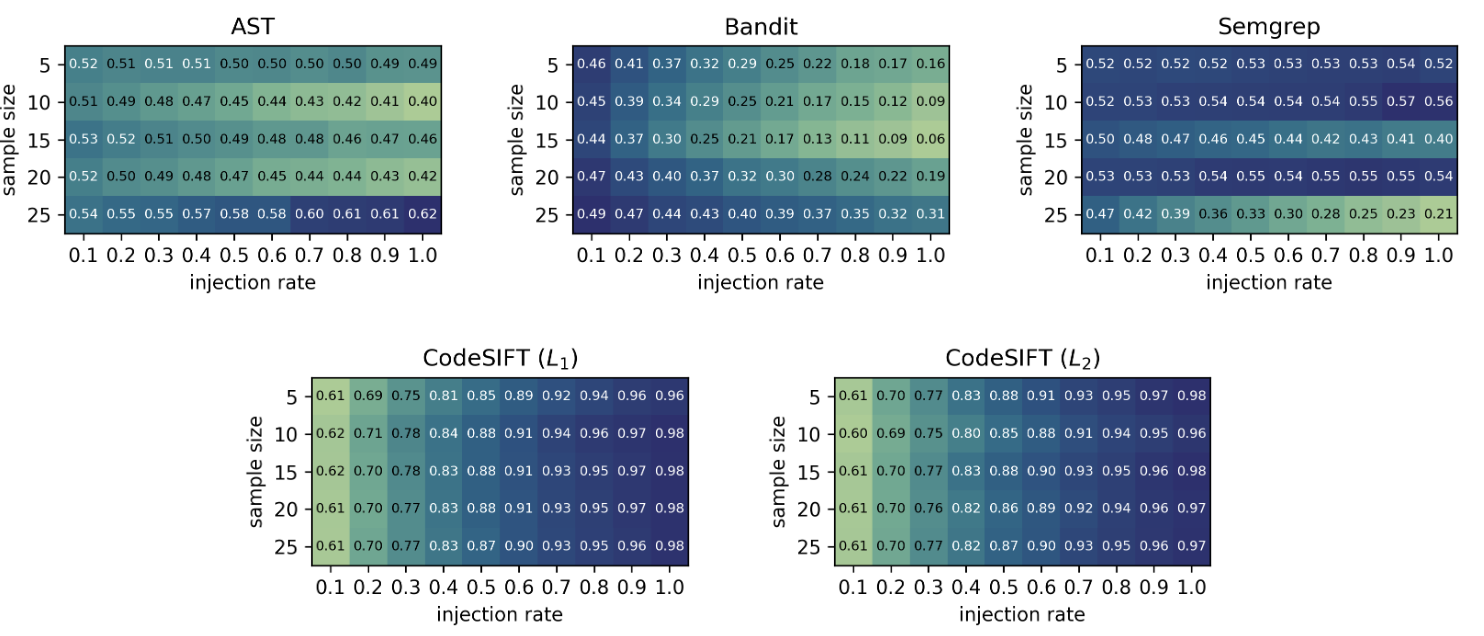}
\caption{AUROC on InfraCloud for Qwen2.5-Coder-7B-Instruct, as a joint function of injection rate and number of sampled completions per prompt, for the three static baselines (top row) and CodeSIFT ($L_1$, $L_2$, bottom row), at a fixed dataset size. CodeSIFT's AUROC depends almost entirely on injection rate and is largely insensitive to the number of sampled completions. The baselines show no clear dependence on either axis.}
\label{fig:infrastructure-Qwen2_5-Coder-7B-Instruct-injection_rate_n_samples}
\end{figure}

\FloatBarrier
\end{document}